\RequirePackage[svgnames]{xcolor}

\documentclass[11pt,letterpaper]{mystyle}

\usepackage[all]{hypcap}
\usepackage[svgnames]{xcolor}
\usepackage[comma,authoryear,compress]{natbib}
\usepackage{url}            
\usepackage{booktabs}       
\usepackage{amsfonts}       
\usepackage{microtype}      
\usepackage{lipsum}         
\usepackage{amsthm}
\newtheorem{definition}{Definition}
\newtheorem{proposition}{Proposition}

\usepackage{graphicx}  
\usepackage{float} 
\usepackage{algorithm}
\usepackage{algorithmic}
\usepackage{tabulary}
\usepackage{xspace}

\usepackage{amsmath}
\usepackage{amssymb}
\usepackage[table,dvipsnames]{xcolor}
\usepackage{comment}
\usepackage{multirow}
\usepackage{booktabs}
\usepackage{makecell}
\usepackage{subcaption}
\usepackage{fancybox}
\usepackage{fancyvrb}
\usepackage{multirow}
\usepackage{tikz}                  %
\usepackage[edges]{forest}
\usepackage{colortbl}
\usepackage{framed}
\usepackage{tcolorbox}
\usepackage{wrapfig}

\usepackage{pgfplots}
\pgfplotsset{compat=1.18}

\usepackage{enumitem}
\setlist[itemize]{topsep=1pt, itemsep=1pt}

\usepackage{url}            
\usepackage{booktabs}       
\usepackage{multirow}    
\usepackage{amsfonts}       
\usepackage{nicefrac}       
\usepackage{microtype}      
\usepackage{natbib}
\usepackage{enumerate}
\usepackage{hhline}
\usepackage{makecell}
\usepackage{pifont}

\usepackage{graphicx} 
\usepackage{amsmath}
\usepackage{amsthm}
\usepackage{amssymb}
\usepackage{tikz}
\usepackage{xcolor}
\usetikzlibrary{arrows}

\allowdisplaybreaks

\usepackage{mathrsfs}

\usepackage{hyperref}
\usepackage{bm}

\allowdisplaybreaks

\newcommand{\expect}{\mathbb{E}}

\usepackage[capitalize,noabbrev]{cleveref}
\crefname{thm}{Theorem}{Theorems}
\crefname{lem}{Lemma}{Lemmas}
\crefname{cor}{Corollary}{Corollaries}
\crefname{prop}{Proposition}{Propositions}
\crefname{asmp}{Assumption}{Assumptions}
\crefname{defn}{Definition}{Definitions}
\crefname{oracle}{Oracle}{Oracles}
\crefname{fact}{Fact}{Facts}
\crefname{conj}{Conjecture}{Conjectures}
\crefname{rem}{Remark}{Remarks}
\crefname{example}{Example}{Examples}
\crefname{condition}{Condition}{Conditions}
\crefname{exercise}{Exercise}{Exercises}
\crefname{algorithm}{Algorithm}{Algorithms}
\crefname{table}{Table}{Tables}
\crefname{figure}{Figure}{Figures}
\crefname{section}{Section}{Sections}
\crefname{subsection}{Section}{Sections}
\crefname{appendix}{Appendix}{Appendices}
\crefname{message}{Message}{Messages}

\definecolor{red}{rgb}{1, 0, 0}
\newcommand{\RED}[1]{{\color{red} #1}}

\definecolor{green}{rgb}{0, 1, 0}

\definecolor{blue}{rgb}{0, 0, 1}

\definecolor{orange}{rgb}{1, 0.4, 0.0}

\input{packages/math_commands}

\usepackage[utf8]{inputenc} %
\usepackage[T1]{fontenc}    %
\usepackage{nicefrac}       %

\renewcommand*{\backrefalt}[4]{%
    \ifcase #1 \footnotesize{(Not cited.)}%
    \or        \footnotesize{(Cited on page~#2.)}%
    \else      \footnotesize{(Cited on pages~#2.)}%
    \fi}

\tcbset{
  aibox/.style={
    width=\linewidth,
    top=7pt,
    bottom=2pt,
    colback=blue!6!white,
    colframe=black,
    colbacktitle=black,
    enhanced,
    center,
    attach boxed title to top left={yshift=-0.1in,xshift=0.15in},
    boxed title style={boxrule=0pt,colframe=white,},
  }
}
\newtcolorbox{AIbox}[2][]{aibox,title=#2,#1}

\definecolor{lightblue}{rgb}{0.22,0.45,0.70}%
\definecolor{Gray}{gray}{0.95}
\definecolor{Cornsilk}{rgb}{1.0, 0.97, 0.86}

\usepackage{amsmath}

\usepackage[all]{hypcap}

\usepackage{listings}
\hypersetup{
  colorlinks   = true, %
  urlcolor     = {TinaCrimson}, %
  linkcolor    = {TinaCrimson}, %
  citecolor   = {YaleBlue} %
}

\newcommand{\piref}{\pi_{\text{ref}}}
\newcommand{\thetaref}{\theta_{\text{ref}}}

\newcommand{\methodfull}{Off-Policy Value-Based Reinforcement Learning with Replay Buffer\xspace}
\newcommand{\method}{ReVal\xspace}

\newcommand{\ttt}[1]{\texttt{#1}}

\title{Off-Policy Value-Based Reinforcement Learning for Large Language Models}

\usepackage{authblk}

\author[1,*]{Peng-Yuan Wang}
\author[2,3,*]{Ziniu Li}
\author[1,*]{Tian Xu}
\author[1]{Bohan Yang}
\author[1]{Tian-Shuo Liu}
\author[1]{ChenYang Wang}
\author[1]{Xiong-Hui Chen}
\author[1]{Yi-Chen Li}
\author[3]{Tianyun Yang}
\author[3,4]{Congliang Chen}
\author[1,$\dagger$]{Yang Yu}

\affil[1]{National Key Laboratory for Novel Software Technology \& School of Artificial Intelligence, Nanjing University, China}
\affil[2]{The Chinese University of Hong Kong, Shenzhen}
\affil[3]{Shenzhen Research Institute of Big Data}
\affil[4]{Shenzhen Loop Area Institute, Shenzhen}

\begin{document}

\begin{abstract}
\textbf{Abstract:} Improving data utilization efficiency is critical for scaling reinforcement learning (RL) for long-horizon tasks where generating trajectories is expensive. However, the dominant RL methods for LLMs are largely \emph{on-policy}: they update each batch of data only once, discard it, and then collect fresh samples, resulting in poor sample efficiency. In this work, we explore an alternative \emph{value-based} RL framework for LLMs that naturally enables \emph{off-policy} learning. We propose \method, a Bellman-update-based method that combines stepwise signals capturing internal consistency with trajectory-level signals derived from outcome verification. \method naturally supports replay-buffer-based training, allowing efficient reuse of past trajectories. Experiments on standard mathematical reasoning benchmarks show that \method not only converges faster but also outperforms GRPO in final performance. On DeepSeek-R1-Distill-1.5B, \method improves training efficiency and achieves improvement of \textbf{2.7\%} in AIME24 and \textbf{4.5\%} in out-of-domain benchmark GPQA over GRPO. These results suggest that value-based RL is a practical alternative to policy-based methods for LLM training.
\end{abstract}

\maketitle

\vspace{3mm}
\begingroup
  \renewcommand{\thefootnote}{*}
  \footnotetext{Equal contribution.}
\endgroup

\begingroup
  \renewcommand{\thefootnote}{$\dagger$}
  \footnotetext{Corresponding author. Email: yuy@nju.edu.cn}
\endgroup

\section{Introduction}

Since the advent of reinforcement learning from human feedback (RLHF), reinforcement learning (RL) has become a central component of large language model (LLM) post-training~\citep{guo2025deepseek, team2025kimi, li2025review, wang2026survey}. In particular, reinforcement learning with verifiable rewards (RLVR) has proven highly effective for improving the reasoning ability of LLMs by training them from correctness signals on complete responses~\citep{lambert2024tulu, o1, guo2025deepseek}.

Because autoregressive LLMs are naturally parameterized as token-level policies, actor-critic policy optimization algorithms such as PPO~\citep{schulman2017ppo} initially became the dominant approach for RL-based post-training. They offered a stable and conceptually straightforward framework for optimizing pretrained language models. As the field matured, however, it became clear that at LLM scale, an RL algorithm is only practical if it is computationally efficient. ReMax~\citep{li2024remax} was the first to move from actor-critic to actor-only RL, significantly reducing memory usage and training time for LLM post-training. Following ReMax, a series of methods, including GRPO~\citep{shao2024deepseekmath} and DAPO~\citep{qiying2025dapo}, further advanced this low-cost policy optimization paradigm.

However, these actor-only methods remain fundamentally \emph{on-policy}. Updates must be computed from data sampled from the current policy, so collected trajectories quickly become stale and can only be reused to a limited extent. For short-horizon tasks, this inefficiency may be acceptable. But LLM development is increasingly shifting toward \emph{agentic} settings with long and highly variable horizons, where trajectory collection is expensive and often dominates the total training cost~\citep{gao2025rollpacker, team2026kimi}. In this regime, reducing per-update overhead is no longer sufficient. The next fundamental requirement is the ability to \emph{reuse experience}, that is, RL for LLMs must become off-policy.

In classical RL, this naturally points to value-based methods, whose efficiency comes from Bellman learning and whose replay-buffer-based training readily supports off-policy data reuse~\citep{watkins1992q,dqn_atari,mnih2015human}. Yet despite these advantages, standard value-based formulations are not directly compatible with LLM post-training. Conventional value-based RL typically relies on a value model that explicitly predicts values, which is apparently incompatible with LLMs. More importantly, introducing such an additional value model would undermine the very low-cost property that makes actor-only methods attractive at LLM scale by increasing both memory and computation overhead. 

A recent work by \cite{li2025generalist} offers a way around this obstacle. It shows that the logits of a pretrained LLM can be interpreted as parameterizing action values of an endogenous reward, up to a state-dependent transformation. If logits can serve as \(Q\)-values, then policy and value no longer need to be represented by separate models. Once a pretrained LLM is viewed as an endogenous value model, a single-model, low-cost, off-policy RL algorithm becomes possible. This perspective leads directly to our method.

\begin{figure}[b!]
    \centering
    \includegraphics[width=\linewidth]{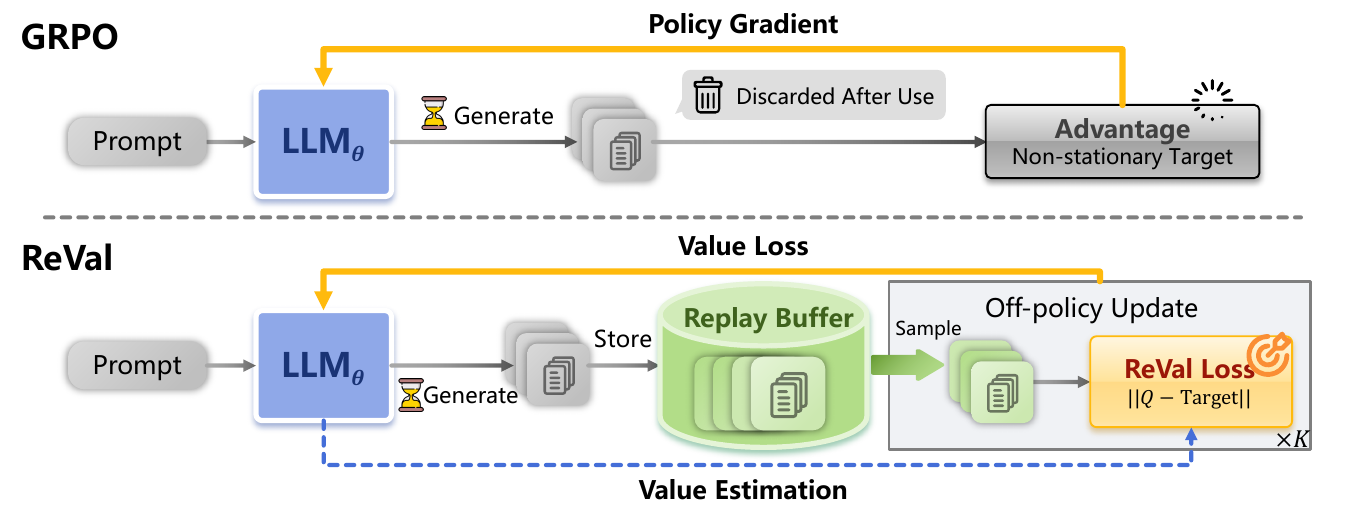}
    \caption{Framework of \method. By interpreting LLM logits as \(Q\)-values, \method unifies policy and value within a single model and enables replay-based off-policy updates.}
    \label{fig:framework}
\end{figure}

In this paper, we propose \method, a value-based RL framework for LLM post-training that preserves the efficiency advantages of ReMax while introducing the off-policy capability required for agentic and long-horizon learning. Our method is built on two key principles. First, on the \emph{objective} side, effective value learning for LLMs should combine supervision at different temporal scales: stepwise signals provide dense feedback by encouraging internal consistency, while trajectory-level signals convey outcome-level correctness from verification. Either signal alone is insufficient: stepwise feedback by itself is inefficient at reflecting trajectory-level outcomes, whereas trajectory-level Bellman learning alone, as in prior work~\citep{yuan2025trajectory}, can suffer from mis-calibration at initialization. We therefore introduce a reward-shaping formulation tailored to logit-parameterized \(Q\)-functions that naturally integrates both stepwise and trajectory-level signals, yielding substantially more stable optimization. Second, on the \emph{data} side, value-based RL should fully exploit replay. We therefore introduce a replay-buffer training mechanism that repeatedly reuses historical trajectories, converting what would otherwise be discarded rollouts into useful supervision. Together, these designs yield a practical single-model value-learning algorithm that is both computationally efficient and genuinely off-policy.

Empirically, we first show that increased off-policy reuse directly accelerates learning. With more frequent replay updates, \method reaches comparable performance substantially faster, achieving an average \(4.3\times\) speedup over GRPO. We then evaluate \method on standard mathematical reasoning benchmarks and find that it consistently outperforms strong policy-based baselines in both convergence speed and final accuracy. On DeepSeek-R1-Distill-1.5B, \method improves over GRPO by \textbf{2.7\%} on AIME24 and \textbf{4.5\%} on the out-of-domain benchmark GPQA. On Qwen2.5-Math-7B, it further surpasses GRPO by \textbf{4.3\%} on GPQA. The advantage is even more pronounced in the limited-rollout setting \((N=1)\), where fresh trajectories are scarce and off-policy reuse is especially valuable: in this setting, \method exceeds GRPO by \textbf{4.8\%} on AIME and \textbf{4.6\%} on GPQA. We further provide ablations on KL regularization, the hyperparameter \(\beta\), and reward design, yielding practical guidance for stable value-based RL in LLMs. Overall, \method shows that it is possible to make RL \emph{off-policy} without making it more expensive, by unifying value and policy within the pretrained LLM itself.

\section{Preliminaries}

\subsection{LLM and its MDP Formulation}

\paragraph{Basic Introduction on LLM.} A large language model (LLM) is a generative model that predicts the next token in a sequence using probabilistic modeling. Formally, an LLM $\pi$ generates tokens from a finite vocabulary $\gV = \{1,2, \ldots, |\gV| \}$ and generates a sequence in an autoregressive manner. At step $h$, given a context sequence $(a_1, \ldots, a_{h-1})$, an LLM produces the next token according to the conditional distribution, namely, $a_h \sim \pi (\cdot|a_1, \ldots, a_{h-1})$. This process continues until a designated end-of-sequence (EOS) token is generated or a predefined maximum length $H$ is reached. For analytical clarity, we assume uniform response lengths of exactly $H$, with padding applied after the EOS token as needed.

\paragraph{MDP Formulation of LLM.} We adopt the Markov decision process (MDP) formulation of LLMs from \citep{li2024remax}, defined by the tuple $\mathcal{M} = \langle \mathcal{S}, \gV, r, P, \rho, H\rangle$. The state space $\gS$ is the set of all finite-length strings formed by the concatenation of elements in $\gV$ and the action space is the vocabulary set $\gV$. When generating a response, the initial state (prompt) $s_1 = (x_1, x_2, \cdots, x_m)$ is sampled from the initial state distribution $\rho$, with $m \in \mathbb{N}$ and $\forall i \in [m], x_i \in \mathcal{V}$. At each step $h \in [H]$, the LLM selects an action (or equivalently, a token) $a_h \in \mathcal{V}$ according to $\pi(\cdot|s_h)$. 
The environment then transits to the next state $s_{h+1} = (x, a_1, \cdots, a_h)$, rewarding the LLM with $r(s_h, a_h) \in [0, 1]$. That is, the transition model $P: \mathcal{S}\times\mathcal{V}\to \Delta (\mathcal{S})$ is usually deterministic. $ P(s_{h+1}|s_h,a_h)=1$ if and only if $s_{h+1} = s_h \oplus a_h$, where $\oplus$ means concatenation. 
The trajectory ends after a total of $H$ steps. In the context of RL, we also call $\pi$ as a policy. Throughout this paper, the terms ``policy'' and ``LLM'' will be used interchangeably.

\subsection{Reinforcement Learning with Verifiable Reward} Reinforcement Learning with Verifiable Reward (RLVR) has become a widely adopted paradigm in LLM reasoning following recent breakthroughs such as OpenAI-o1~\citep{o1} and DeepSeek-R1~\citep{guo2025deepseek}.
Unlike RLHF, which relies on learned reward models, RLVR trains LLMs by maximizing a \emph{rule-based outcome reward} with KL-regularization:
\begin{align} 
     \max_{\theta} \;
\mathbb{E}_{x \sim \rho} \Big[
    \mathbb{E}_{a_{1:H} \sim \pi_{\theta} (\cdot \mid x)}
    \big[
        r_{\text{rule}} (x, a_{1:H})
    \big] - \beta \,
\KL\!\big( 
    \pi_{\theta}(\cdot \mid x),
    \pi_{\text{ref}}(\cdot \mid x)
\big) \Big] .
     \label{eq:reward_maximization}
\end{align}
This rule-based reward $r_{\text{rule}}$ evaluates the correctness of the final answer based on deterministic verification procedures. For mathematical tasks, one can directly compare the final answer in the response with the ground-truth answer, checking for mathematical equivalence. Besides, $\KL (\pi_{\theta}(\cdot | x),
    \pi_{\text{ref}}(\cdot | x)) = \sum_{a_{1:H}} \pi_{\theta} (a_{1:H}|x) \log (\pi_{\theta} (a_{1:H}|x) / \pi_{\text{ref}} (a_{1:H}|x) ) $ denotes the KL divergence, which prevents the learning model from deviating too far from the reference model and $\beta > 0$ controls the regularization strength.

\section{Limitations of On-Policy Methods}
\label{sec:limitation_of_off}

\begin{wrapfigure}{r}{0.4\linewidth}
\centering
\includegraphics[width=\linewidth]{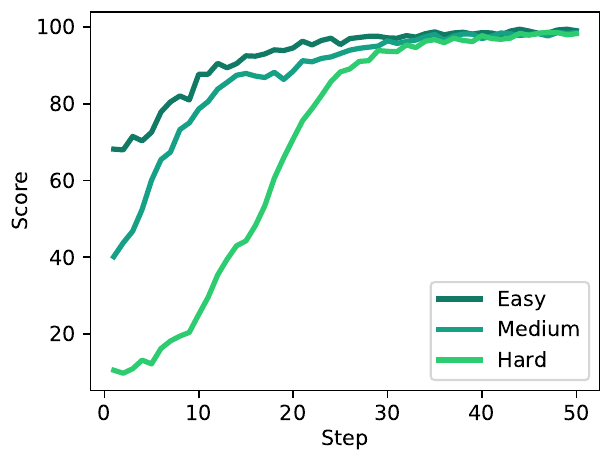}
\caption{Performance of GRPO across different difficulty levels.}
\label{fig:grpo_by_difficulty}
\end{wrapfigure}
We revisit the foundations of policy gradient methods and examine their implications for practical training cost. By construction, policy gradient methods \citep{sutton1988learning} update the policy at iteration $k$ using gradient estimates computed from trajectories sampled under the current policy $\pi_k$. \emph{After the update, those trajectories are no longer on-policy and therefore cannot be reused directly in subsequent iterations; fresh samples must be collected again.} Because each policy update is typically small and local, a single gradient step cannot fully exploit the information contained in one batch of trajectories~\citep{bottou2010large}. As a result, convergence generally requires many iterations of alternating data collection and optimization.

We empirically demonstrate this limitation using a one-shot task learning setting, where training is conducted on a single prompt (Figure~\ref{fig:grpo_by_difficulty}). GRPO~\citep{shao2024deepseekmath}, as an on-policy method, collects fresh trajectories at every iteration and therefore directly reflects the sampling difficulty of the task. On hard task (\texttt{avg@1024} = 0.10), GRPO requires substantially more optimization steps than on medium (\texttt{avg@1024} = 0.40) or easy (\texttt{avg@1024} = 0.68) task to reach the same performance threshold. These findings indicate that: (1) even in a highly simplified one-shot setting, learning still depends on repeated alternation between data collection and parameter updates; and (2) harder tasks may require many more such iterations before they can be solved.


These inefficiencies translate directly into practical bottlenecks. Let $K_{\mathrm{generation}}$ denote the number of generation rounds and $K_{\mathrm{update}}$ the number of parameter updates. In standard on-policy training, the two are tightly coupled, so typically $K_{\mathrm{generation}} = K_{\mathrm{update}}$, since each update requires newly sampled trajectories. If the time cost of one generation round is $T_{\mathrm{generation}}$ and that of one update is $T_{\mathrm{update}}$, then the total training time can be approximated as
\[
T_{\mathrm{total}} \approx K_{\mathrm{generation}} T_{\mathrm{generation}} + K_{\mathrm{update}} T_{\mathrm{update}}.
\]
For RL with LLMs, $T_{\mathrm{generation}}$ is often much larger than $T_{\mathrm{update}}$ because of the cost of autoregressive sequence generation~\citep{qin2025seer}. Consequently, reducing total training time requires either decreasing $T_{\mathrm{generation}}$ itself, for example through faster generation systems \citep{leviathan2022fast}, or decreasing $K_{\mathrm{generation}}$, that is, learning more efficiently from each round of collected data \citep{yu2018towards}. In other words, the central challenge is to extract more useful learning signal from the same set of trajectories.

This observation motivates value-based and off-policy methods, which decouple data collection from policy updates and allow historical trajectories to be reused across multiple optimization steps \citep{watkins1992q, mnih2015human, td3}. By performing more parameter updates on the same batch of trajectories, the algorithm can extract more learning signal from each generation round. This improved reuse can accelerate convergence, because the model makes greater progress before new data need to be collected. As a result, the total number of generation rounds required during training may decrease. Although this strategy introduces additional update cost, parameter updates are cheaper than autoregressive generation in LLM-based RL. Therefore, when we reduce generation rounds, the overall wall-clock training time can also decrease. From this perspective, the benefit of off-policy methods lies not only in improved sample efficiency, but also in lower practical training cost through more effective reuse of generated trajectories. In the next section, we explore value-based and off-policy methods from this perspective.

\section{Proposed Method}

\subsection{Towards Value-Based RL for LLMs}
\paragraph{Q-Function Parameterization in LLMs.}
A key challenge in applying value-based RL to LLMs is how to represent or initialize the Q-function. In standard Q-learning, the Q-function is parameterized as a mapping from a state to a vector of Q-values over all actions \citep{watkins1992q, mnih2015human}, i.e., $f(s_h) \rightarrow \mathbb{R}^{|\mathcal{A}|}$, enabling greedy action selection by taking the $\arg\max$ over the output. Unlike standard RL settings where such a Q-function can be learned from a randomly initialized network, this approach is infeasible for LLMs for two reasons. First, the token vocabulary constitutes an enormous action space and rewards are typically sparse, making it difficult to learn a reliable Q-function from outcome signals alone. Second, learning a Q-function from scratch requires a large amount of data, whereas the amount of data available in RL fine-tuning is far from sufficient.


\citet{li2025generalist} established a principled solution to this challenge. They showed that a language model trained via next-token prediction implicitly learns a soft Q-function: given a language model $\hat{\pi}$ parameterized as $\hat{\pi}(\cdot \mid s_h) = \text{softmax}(\hat{f}(s_h, \cdot))$, the logits $\hat{f}(s_h, a_h)$ directly correspond to the soft Q-values of the data-generating policy. Q-values can be learned from the implicit rewards in pretraining data through an inverse RL formulation. This reveals that LLM logits are not arbitrary scores but encode value-relevant information about token-level decisions, providing a well-initialized Q-function for free.

\label{subsec:method}

\paragraph{TBRM.}
Similar to \citet{li2025generalist}, TBRM~\citep{yuan2025trajectory} adopts the logit-as-$Q$ parameterization:
\begin{align*}
    Q_{\theta}(s_h, a_h) := \text{logit}_{\theta}(s_h, a_h),
\end{align*}
where the LLM's own logits serve as the Q-function without requiring a separate Q-value network.
Given this Q-function parameterization, TBRM learns by minimizing the trajectory-level Bellman residual in KL-regularized RL framework. 
TBRM minimizes the trajectory-level Bellman residual:
\begin{align*}
    \mathcal{L}_{\text{TBRM}}(\theta) 
    &= \frac{1}{|\hat{\mathcal{D}}|} \sum_{\tau \in \hat{\mathcal{D}}} \left( \log \pi_{\theta}(\tau) - \log \pi_{\text{ref}}(\tau) - \frac{r_{\text{rule}}(\tau)}{\beta} + V_{{\theta}}(s_1) \right)^2,
\end{align*}
where $\hat{\mathcal{D}}$ denotes the on-policy data which is collected from current policy. $\pi(\tau) = \prod_{h=1}^H \pi(a_h|s_h)$ denotes the probability of trajectory $\tau$ and $V_{\theta}(s_1)$ is the induced V-function, $V_{\theta}(s_1)=\log \sum_{a \in \mathcal{A}} \exp Q(s_{1}, a)$.

TBRM showed empirical success with on-policy data. However, we have identified that TBRM does not satisfy \emph{Calibrated Initialization}, which leads to spurious policy drift in the absence of reward signals. In the next section, we propose our training objective to address these limitations.
\subsection{\methodfull (\method)}
We begin by examining the limitations of the TBRM training objective. First, we show that a desirable property for the training objective is defined as follows.

\begin{definition}[Calibrated Initialization]
A training objective satisfies \emph{Calibrated Initialization} if, when $r_\text{rule} = 0$, the optimal policy under the KL-regularized RL objective reduces to the reference policy, i.e., $\pi^* = \pi_{\text{ref}}$.
\end{definition}

At the beginning of training, when no reward signal is available (i.e., $r_\text{rule} = 0$), the desired behavior is to leave the policy unchanged, i.e., $\pi^* = \pi_\text{ref}$. If this property is not satisfied, the model will still produce parameter updates, leading to spurious policy drift. 

\begin{proposition}
TBRM does not satisfy \emph{Calibrated Initialization}. Specifically, setting $r(\tau) = 0$ in the TBRM objective does not yield $\pi^* = \pi_\text{ref}$ as the optimal solution.
\label{prop:tbrm}
\end{proposition}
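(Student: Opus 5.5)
The plan is to exhibit an explicit instance in which the TBRM objective with $r_{\text{rule}}=0$ is minimized by a policy different from $\pi_{\text{ref}}$. The proof hinges on one observation about the logit-as-$Q$ parameterization. Since $\pi_\theta(a\mid s)=\exp(Q_\theta(s,a)-V_\theta(s))$, we have $\log\pi_\theta(a_1\mid s_1)+V_\theta(s_1)=Q_\theta(s_1,a_1)$. So the quantity penalized by TBRM involves the \emph{unnormalized} logits. These are only determined up to a state-dependent shift by the policy, and the reference model's logits carry an arbitrary normalizer $V_{\text{ref}}(s_1)=\log\sum_a\exp\text{logit}_{\text{ref}}(s_1,a)$. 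The key structural point is that the shift $V_\theta(s_1)$ does not cancel inside the residual.

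First I will treat the simplest case, $H=1$, a single prompt $s_1$, and tabular logits $Q_\theta(s_1,a)=f(a)$ initialized at $f_0=\text{logit}_{\text{ref}}$. In this case the residual for a sampled token $a$ is
\[
\log\pi_\theta(a\mid s_1)-\log\pi_{\text{ref}}(a\mid s_1)+V_\theta(s_1)=f(a)-f_0(a)+V_{\text{ref}}(s_1).
\]
At $\theta=\theta_{\text{ref}}$ this residual equals $V_{\text{ref}}(s_1)$. That value is nonzero whenever the reference logits are not already log-normalized, which is the generic situation for an LLM, since softmax is shift invariant and nothing pins $\sum_a e^{f_0(a)}=1$. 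Hence the loss at $\pi_{\text{ref}}$ is $V_{\text{ref}}(s_1)^2>0$. Its gradient with respect to $f(a)$ is $2V_{\text{ref}}(s_1)\neq 0$, so $\pi_{\text{ref}}$ is not even a stationary point.

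Next I will identify the minimizers to show that the drift is genuine, i.e. that it changes the policy and not merely the logits. Take a finite (on-policy) dataset $\hat{\mathcal D}$ whose support $A\subsetneq\mathcal V$ misses some tokens, which is the realistic case. The loss is zero exactly when $f(a)=f_0(a)-V_{\text{ref}}(s_1)$ for $a\in A$, with the remaining logits unconstrained. Gradient descent from $f_0$ leaves those remaining logits at $f_0$. The resulting policy gives the sampled tokens relative weight $e^{-V_{\text{ref}}(s_1)}$ against the unsampled ones, so it differs from $\pi_{\text{ref}}$ whenever $V_{\text{ref}}(s_1)\neq0$. Even in the full-support limit, the minimizer can equal $\pi_{\text{ref}}$ only after the logits are shifted. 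This shows that the objective is driven by the calibration of $V$ rather than by reward, and in any non-tabular parameterization that shift leaks into policy changes through shared parameters.

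Finally, I will note that the counterexample embeds into the general $H$-step case by making later steps deterministic. That is enough for the stated proposition, which only asserts failure of Calibrated Initialization. The main obstacle is being precise about what ``optimal solution'' means when the objective is defined on samples. I would handle it by stating the claim in two forms: $\pi_{\text{ref}}$ with its native logits has strictly positive loss and nonzero gradient, and the minimizer reached from $\theta_{\text{ref}}$ over the finite dataset is a policy $\neq\pi_{\text{ref}}$.
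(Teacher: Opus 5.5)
Your proof is correct, and its core is the paper's own observation. With $r=0$ the TBRM residual is $V_\theta(s_1)+\log\pi_\theta(\tau)-\log\pi_{\text{ref}}(\tau)$, and at $\theta_{\text{ref}}$ it collapses to $V_{\text{ref}}(s_1)$ instead of $0$. The paper stops at the informal claim that minimization forces the log-ratio to offset $V_\theta(s_1)$, and supports it with the zero-reward KL experiment. You go further in four ways: you make the implicit hypothesis $V_{\text{ref}}(s_1)\neq 0$ (unnormalized logits) explicit, you give a concrete tabular instance, you show $\theta_{\text{ref}}$ is not even stationary, and you compute where gradient descent lands on a fixed $\hat{\mathcal D}$ with incomplete support.

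That extra care pays off, because your full-support remark exposes an overreach in the paper's wording. State it outright rather than hedge it. If the residual vanishes on every trajectory, then summing $\pi_\theta(\tau)=e^{-V_\theta(s_1)}\pi_{\text{ref}}(\tau)$ over $\tau$ forces $V_\theta(s_1)=0$, and hence $\pi_\theta=\pi_{\text{ref}}$. So with full-support data and an expressive parameterization, every minimizer of the $r=0$ TBRM loss induces exactly $\pi_{\text{ref}}$, with the logits at $s_1$ equal to $\log\pi_{\text{ref}}(\cdot\mid s_1)$. What genuinely fails has two parts. First, the native initialization is not a minimizer; this is exactly the property the paper verifies for ReVal, where $\mathcal{L}_{\text{ReVal}}=0$ at $\theta_{\text{ref}}$. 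Second, there is the finite-sample drift of your second form. So rest the proposition on your two forms, which is the right way to make the statement true.

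Drop the sentence claiming that in non-tabular models the shift ``leaks into policy changes through shared parameters.'' It is unproven and false in general. Take a trainable unembedding $W$ and final hidden state $z(s)$, and replace $W$ by $W+\mathbf{1}u^\top$. This shifts all logits at state $s$ uniformly by $u^\top z(s)$, so every softmax is unchanged. For a single prompt, choosing $u^\top z(s_1)=-V_{\text{ref}}(s_1)$ then reaches zero loss with the policy still equal to $\pi_{\text{ref}}$.

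Finally, your reduction to general $H$ via deterministic later steps is unnecessary for your first form. At $\theta_{\text{ref}}$ the residual equals $V_{\text{ref}}(s_1)$ on every trajectory, for every $H$.
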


However, TBRM does not satisfy this property, as stated in Proposition~\ref{prop:tbrm}. Setting $r(\tau)=0$ in the TBRM objective yields:
\[
\mathcal{L}_{\text{TBRM}}(\theta) = \left(V_\theta(s_1) + \sum_{h=1}^{H} \log \frac{\pi_\theta(a_h \mid s_h)}{\pi_{\text{ref}}(a_h \mid s_h)}\right)^2.
\]
Minimizing this drives the squared term toward zero, which requires the log-likelihood ratio to cancel $V_\theta(s_1)$ rather than merely minimizing the KL divergence between $\pi_\theta$ and $\pi_{\text{ref}}$. As a result, the optimal solution does not correspond to matching the reference policy. We provide a detailed discussion and empirical verification in Appendix~\ref{sec:issue_tbrm}. 

To address this issue, we introduce reward shaping to redefine the Bellman objective. Specifically, we define a \emph{modified reward} function:
\begin{align*}
R_\beta(s_h, a_h) := \frac{r_{\text{rule}}(s_h, a_h)}{\beta} + \log \pi_{\text{ref}}(a_h \mid s_h) + \underbrace{\textcolor{red}{V_\theta(s_h) - V_{\text{ref}}(s_h)}}_{\text{reward shaping term}},
\end{align*}
where the first term is the scaled environment reward, and the remaining terms form an endogenous reward~\citep{li2025generalist} guided by the reference policy. Intuitively, the endogenous reward incorporates the reference policy to guide the model, and the reward shaping term introduces a state-dependent offset that does not affect the optimal solution~\citep{ng1999policy}.

Based on this modified reward, we define the Bellman operator as:
\begin{align*}
(\mathcal{T}_{\beta} Q)(s_h, a_h)
&= \underbrace{\frac{r_{\text{rule}}(s_h, a_h)}{\beta}}_{\text{task reward}}
+ \log \pi_{\text{ref}}(a_h \mid s_h)
+ \underbrace{V_{\text{ref}}(s_h) - V_{\text{ref}}(s_{h+1})}_{\text{reward shaping term}} \\
&\quad + \mathbb{E}_{s_{h+1} \sim \mathcal{P}(\cdot \mid s_h, a_h)} \left[ \log \sum_{a \in \mathcal{A}} \exp Q(s_{h+1}, a) \right],
\end{align*}
where $V_{\text{ref}}(s_h) = \log \sum_{a \in \mathcal{A}} \exp Q_{\text{ref}}(s_h, a)$ and $\pi_{\text{ref}}(\cdot \mid s_h) = \text{softmax}(Q_{\text{ref}}(s_h, \cdot))$. The trajectory-level Bellman residual loss is then:
\begin{align}
\mathcal{L}_{\text{ReVal}}(\theta)
&= \frac{1}{|\mathcal{D}|} \sum_{\tau \in \mathcal{D}} \left( \sum_{h=1}^H Q_\theta(s_h, a_h) - (\mathcal{T}_{\beta} Q_\theta)(s_h, a_h) \right)^2 \notag \\
&= \frac{1}{|\mathcal{D}|} \sum_{\tau \in \mathcal{D}} \left( V_\theta(s_1) - \textcolor{red}{V_{\text{ref}}(s_1)} + \log \pi_\theta(\tau) - \frac{r_{\text{rule}}(\tau)}{\beta} - \log \pi_{\text{ref}}(\tau) \right)^2.
\label{eq:loss}
\end{align}
Where $D$ denotes the off-policy data. This formulation ensures \emph{Calibrated Initialization}, as stated in the following proposition (the proof is shown in Appendix~\ref{sec:proof}). The Eq.~\ref{eq:loss} is used for optimizing as shown in Algorithm~\ref{alg:brm_mc}. 

\begin{proposition}
\method satisfies Calibrated Initialization. Consider the objective in Eq.~\ref{eq:loss}. When $r=0$ and the policy is initialized as $\pi_\theta = \pi_{\text{ref}}$, we have $V_\theta(s_1) = V_{\text{ref}}(s_1)$ and $\log \pi_\theta(\tau) = \log \pi_{\text{ref}}(\tau)$, which yields $\mathcal{L}_{\text{ReVal}}(\theta) = 0$.
\end{proposition}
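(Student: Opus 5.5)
The plan is to verify directly that the reference policy is a zero of the nonnegative loss in Eq.~\ref{eq:loss}, and hence a global minimizer. Since $\mathcal{L}_{\text{ReVal}}$ is an average of squares, it is bounded below by zero. So any parameter with $\mathcal{L}_{\text{ReVal}}(\theta)=0$ on every trajectory is optimal, and it suffices to show that $\pi_\theta=\pi_{\text{ref}}$ achieves this when $r_{\text{rule}}=0$.

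\textbf{Step 1: identify the parameters.} Under the logit-as-$Q$ parameterization, I interpret ``initialized as $\pi_\theta=\pi_{\text{ref}}$'' as $\theta=\thetaref$. This gives $Q_\theta(s,\cdot)=Q_{\text{ref}}(s,\cdot)$ at every state, not merely equality of the softmaxes. This is the one delicate point. Logits are determined by the policy only up to a state-dependent additive constant, so equality of policies alone would not force $V_\theta(s_1)=V_{\text{ref}}(s_1)$. I would state explicitly that the initialization copies the reference model's weights, which is how training actually starts. Equality of the Q-functions then gives
\[
V_\theta(s_1)=\log\sum_{a}\exp Q_\theta(s_1,a)=\log\sum_a\exp Q_{\text{ref}}(s_1,a)=V_{\text{ref}}(s_1).
\]

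\textbf{Step 2: compare the trajectory log-likelihoods.} From $\pi_\theta(\cdot\mid s_h)=\pi_{\text{ref}}(\cdot\mid s_h)$ at every state, the factorization $\pi(\tau)=\prod_h\pi(a_h\mid s_h)$ gives $\log\pi_\theta(\tau)=\log\pi_{\text{ref}}(\tau)$. This holds for every trajectory $\tau$, including off-policy trajectories in $\mathcal{D}$.

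\textbf{Step 3: evaluate the loss.} Plugging $r_{\text{rule}}=0$ and the two equalities into Eq.~\ref{eq:loss}, each summand becomes
\[
\bigl(0+0-0\bigr)^2=0,
\]
so $\mathcal{L}_{\text{ReVal}}(\theta)=0$. This is the global minimum, so $\pi_{\text{ref}}$ is an optimal solution and the gradient vanishes there, meaning there is no spurious drift.

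If one also wants the converse, that the minimizer is essentially unique in policy, I would argue as follows. Zero loss on all trajectories forces $\log\pi_\theta(\tau)-\log\pi_{\text{ref}}(\tau)$ to equal the constant $V_{\text{ref}}(s_1)-V_\theta(s_1)$ for all $\tau$ sharing the prompt $s_1$. Summing $\pi_{\text{ref}}(\tau)\,e^{c}=\pi_\theta(\tau)$ over $\tau$ gives $e^c=1$, hence $\pi_\theta=\pi_{\text{ref}}$. This is precisely the step that fails for TBRM in Proposition~\ref{prop:tbrm}, where the constant is $-V_\theta(s_1)$ rather than $V_{\text{ref}}(s_1)-V_\theta(s_1)$ and cannot be forced to zero.
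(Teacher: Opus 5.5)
Your Steps 1--3 are the paper's proof. Equal logits give $V_\theta(s_1)=V_{\text{ref}}(s_1)$, the token-level factorization gives $\log\pi_\theta(\tau)=\log\pi_{\text{ref}}(\tau)$, and substituting $r=0$ zeroes every residual. Your caveat that this needs $\theta=\theta_{\text{ref}}$ (equal logits), not merely equal softmaxes, tightens the paper. The paper simply asserts $Q_\theta=Q_{\text{ref}}$ from $\pi_\theta=\pi_{\text{ref}}$. That inference fails in general: shifting the logits at $s_1$ by a constant $c(s_1)$ leaves the policy unchanged but makes every residual equal to $c(s_1)$.

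Your closing aside on TBRM needs a correction, because the summation argument does not fail there. With $r=0$, zero residual on every trajectory gives $\pi_\theta(\tau)=e^{-V_\theta(s_1)}\pi_{\text{ref}}(\tau)$. Summing over $\tau$ forces $V_\theta(s_1)=0$, and hence $\pi_\theta=\pi_{\text{ref}}$. So in function space, TBRM's exact minimizers also carry the reference policy; the constant \emph{is} forced to zero.

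What actually distinguishes TBRM is that $\theta_{\text{ref}}$ is not itself a zero of its loss, since the residual there is $V_{\text{ref}}(s_1)$. Training started from the reference therefore receives nonzero gradients. Driving the shared network toward $V_\theta(s_1)=0$ generally perturbs the policy along the way. That is the drift the paper describes; it is not a failure of the uniqueness step.
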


\begin{algorithm}[htbp]
    \caption{Off-Policy Value-Based Reinforcement Learning with Replay Buffer (\method)}
    \label{alg:brm_mc}
    \begin{algorithmic}[1]
    \REQUIRE{Task prompt dataset $\gD_{\text{task}}$, first-in-first-out (FIFO) replay buffer $\gD_{\text{replay}} = \emptyset$, task reward $r$, reward scaling coefficient $\beta$, reference policy $\piref$ with parameter $\thetaref$, number of iterations $T$.}
    \STATE{Initialize.}
    \FOR{$t = 1, 2, \ldots, T$}
        \STATE{For each question $q \in \gD^t_{\text{task}}$, sample trajectories from policy $\pi_{\theta}$ , and collect these trajectories into batch $\gD^t$.}

        \STATE{Augment buffer with the batch $\mathcal{D}_{\text{replay}} \leftarrow \mathcal{D}_{\text{replay}} \cup \mathcal{D}^t$ and evict oldest samples if capacity is exceeded.}
        
        \STATE{Sample an off-policy batch $\gD^t_{\text{replay}} \subset \gD_{\text{replay}}$.}
        \STATE{\RED{Update $\theta$ via gradient descent using Eq.~\ref{eq:loss} based on off-policy batch $\gD^t_{\text{replay}}$.}}
        \STATE{Sample a prompt batch $\gD^t_{\text{task}} \subset \gD_{\text{task}}$.}
        
    \ENDFOR
    \end{algorithmic}
    \end{algorithm}
\subsection{Replay Buffer for Off-Policy Learning}

A key advantage of value-based RL over policy gradient methods is its natural compatibility with off-policy data. TBRM operates in an on-policy manner, discarding each batch of trajectories after a single update. \method introduces a replay buffer $\mathcal{D}_{\text{replay}}$ that stores historical trajectories and enables off-policy learning, which satisfy desirable properties of value-based RL. At each iteration, newly collected trajectories are added to the buffer, and a batch is sampled from the full buffer for the gradient update, allowing efficient reuse of past experience.

We adopt a first-in-first-out (FIFO) replay buffer of size $M$. At each iteration, $B$ new trajectories are collected and stored in the buffer. We then perform $K$ updates per iteration, each sampling a batch of size $B$ uniformly from the buffer. A trajectory remains in the buffer for $\left\lfloor M / B \right\rfloor$ iterations before being evicted. Since each trajectory is sampled with probability $B/M$ per update step, the expected total number of gradient updates a single trajectory contributes to is:
\begin{equation}
    \mathbb{E}[\text{updates per trajectory}] = \left\lfloor \frac{M}{B} \right\rfloor \cdot \frac{B}{M} \cdot K \approx K.
\end{equation}
In practice, we use $B = 1024$, $M = 5120$, and $K = 2$, yielding an expected reuse of $K \approx 2$ gradient updates per trajectory, compared to the one-shot usage in on-policy methods. We leave the exploration of more efficient sampling strategies, such as prioritized experience replay~\citep{schaul2015prioritized}, to future work.

\section{Experiments}

\subsection{Experimental Setup}
\paragraph{Experiment Setting.} We implement \method and baseline methods using the large-scale RL
training framework Verl~\citep{sheng2024hybridflow}. Our primary focus is on GRPO~\citep{shao2024deepseekmath}, which are widely examined policy optimization methods in LLM training. We also adopt a value-based baseline, namely TBRM~\citep{yuan2025trajectory}. We train our models using the DeepScaleR dataset~\citep{deepscaler2025}. All methods are trained for 650 iterations.

We conduct experiments with DeepSeek-R1-DistillQwen-1.5B~\citep{guo2025deepseek} (abbreviated as DPSK-R1-Distill-1.5B) and Qwen2.5-Math-7B~\citep{yang2024qwen25mathtechnicalreportmathematical}. In each iteration, we employ a batch size of M = 128 prompts and generate
N = 8 rollouts. All responses are sampled with a
temperature of 1.0. More details of our implementation can be found in Appendix \ref{sec:detailed_exp}. For evaluation, we follow \citet{deepscaler2025} and assess our method on several mathematical
reasoning benchmarks: AIME, AIME25, AMC, MATH, MINERVA, Olympiad Bench (Olympiad
for short) and GPQA.
All reported performance metrics are averaged over 16 generated responses.

\subsection{The Importance of Off-Policy Data for Optimization}
We first verify the importance of off-policy data reuse across tasks of varying difficulty, demonstrating that frequent data reuse can drastically reduce the number of optimization steps.                                                                       
\begin{figure}[htbp]
    \centering
    \begin{subfigure}{0.32\linewidth}
        \centering
        \includegraphics[width=\linewidth]{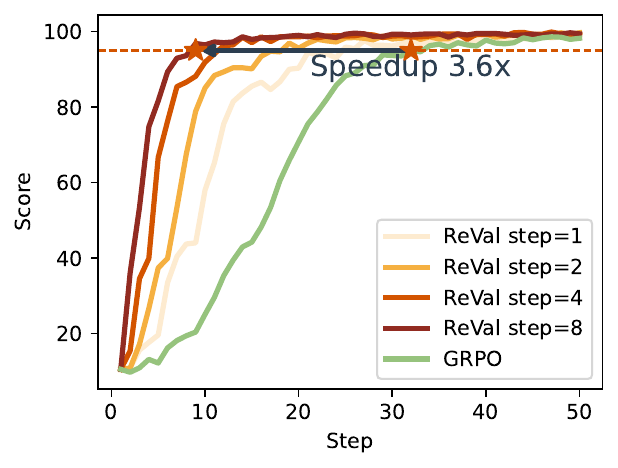}
        \vspace{-16pt}
        \caption{Hard task}
        \label{fig:frequency_hard}
    \end{subfigure}
    \begin{subfigure}{0.32\linewidth}
        \centering
        \includegraphics[width=\linewidth]{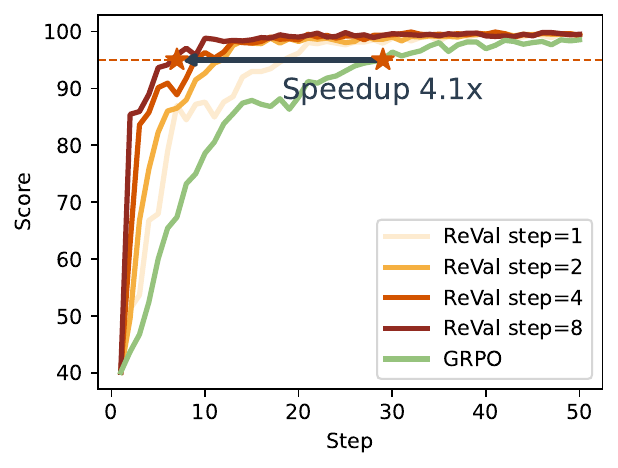}
        \vspace{-16pt}
        \caption{Medium task}
        \label{fig:frequency_medium}
    \end{subfigure}
    \begin{subfigure}{0.32\linewidth}
        \centering
        \includegraphics[width=\linewidth]{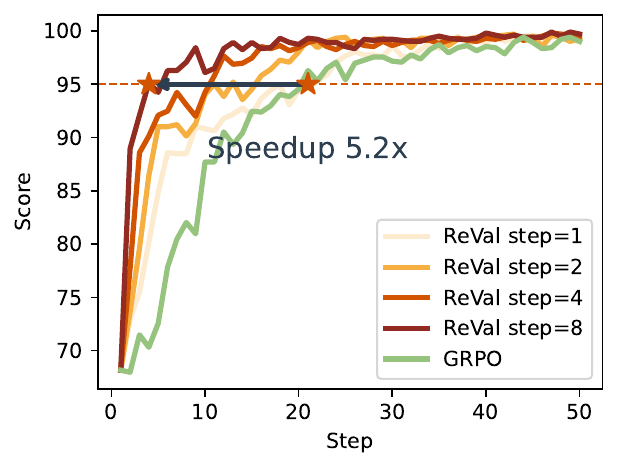}
        \vspace{-16pt}
        \caption{Easy task}
        \label{fig:frequency_easy}
    \end{subfigure}
        \vspace{-3mm}

    \caption{Performance under different data reuse frequencies on tasks with varying difficulty levels.}
    \label{fig:frequency}
\end{figure}
We use the same tasks as in Section~\ref{sec:limitation_of_off}, using a one-shot task learning setting, where training is conducted on a single prompt. We construct three task difficulty levels, quantified by the average success rate at 1024 samples (\texttt{avg@1024}): a hard task with \texttt{avg@1024} = 0.10, a medium task with \texttt{avg@1024} = 0.40, and an easy task with \texttt{avg@1024} = 0.68. We evaluate our proposed \method under different data reuse frequencies (\texttt{step=1}, \texttt{step=2}, \texttt{step=4}, \texttt{step=8}), where \texttt{step=K} indicates that data is sampled from the buffer and optimized in each of $K_{\mathrm{update}}$ before new data collection, and compare against the GRPO which collects fresh data at every optimization step.

As shown in Figure~\ref{fig:frequency}, \method, by reusing data more frequently, achieves an average 4.3× speedup in convergence to high performance across tasks of varying difficulty: for the hard task (Figure~\ref{fig:frequency_hard}), \method with step=9 reaches a score of $95.0$ while GRPO requires 33 steps (3.6x speedup); for the medium task (Figure~\ref{fig:frequency_medium}) and easy task (Figure~\ref{fig:frequency_easy}), \method yield 4.1x and 5.2x speedups. These results confirm that off-policy data reuse directly enhances model performance. Furthermore, the advantage of multiple optimization updates becomes increasingly prominent as task difficulty rises. As observed in the Figure~\ref{fig:frequency}, harder tasks require more optimization steps to reach 95.0. \method achieves the same performance with substantially fewer steps than GRPO, with the gap widening as task difficulty increases. For hard tasks, valid samples are inherently scarce, making intensive and repeated utilization of off-policy data particularly critical for effective policy optimization. These results show the importance of off-policy data for model optimization.


\subsection{Main Results}
\begin{table}[htbp]
    \centering
    \caption{Evaluation performance (\ttt{avg@16}) comparison across different models and benchmarks.}
    \label{tab:main_results}
    \begin{tabular}{l|cccccccc}
    \toprule 
    & {\footnotesize \rotatebox{45}{AIME24}} & {\footnotesize \rotatebox{45}{AIME25}} &{ \footnotesize \rotatebox{45}{AMC}} & { \footnotesize \rotatebox{45}{MATH}} & {\footnotesize \rotatebox{45}{MINERVA} } & {\footnotesize \rotatebox{45}{Olympiad} } & { \footnotesize  \rotatebox{45}{GPQA} } &  {\footnotesize \rotatebox{45}{Avg} } \\
    \midrule
    DPSK-R1-Distill-1.5B & 19.8&  20.0& 50.7& 76.3 & 22.9 & 37.5 & 15.8 & 34.7 \\
    + GRPO & 29.4 & 24.4 & 65.0& 82.6 & 27.6 &  46.3 & 28.8 & 43.4 \\
    + TBRM &26.9 & 22.1   &65.1 & 81.3 & 28.8 & 44.6  &27.3 & 42.3 \\
     \rowcolor{blue!10} + \method & \textbf{32.1} & 23.8 & \textbf{68.6}& \textbf{84.6} & \textbf{30.3} & \textbf{46.6} & \textbf{33.3} & \textbf{45.6} \\
    \hline 
    Qwen2.5-Math-7B  & 19.0 & 6.9& 43.6 & 60.1 & 10.9 & 26.3 & 12.8 & 25.7 \\
    + GRPO & \textbf{34.8} & 12.3& 59.4 & 74.0 & \textbf{30.8} & 37.0 & 20.4 & 38.4 \\
    + TBRM & 30.8  & 10.0 & 58.3 & 74.4 & 27.3 & 37.8 & 19.9 &36.9\\
    \rowcolor{blue!10} + \method & 34.0 &\textbf{13.3} & \textbf{60.2} & \textbf{75.2} & \textbf{30.7} & \textbf{39.2} & \textbf{24.8} & \textbf{39.6}  \\  \bottomrule 
    \end{tabular}%
\end{table}
We conduct experiments on both DPSK-R1-Distill-1.5B and Qwen2.5-Math-7B. The training curves and final evaluation results are presented in Table~\ref{tab:main_results} and Figure~\ref{fig:training_curves_n8}, respectively.
Table~\ref{tab:main_results} reports the final evaluation results, while Figure~\ref{fig:training_curves_n8} presents the evaluation performance curves of different algorithms throughout training. From Table~\ref{tab:main_results}, we observe that \method outperforms the baselines on DPSK-R1-Distill-1.5B on almost all benchmarks, achieving state-of-the-art performance. Beyond the in-domain benchmarks, \method further surpasses GRPO by 4.3\% on the out-of-domain benchmark GPQA, demonstrating the strongest generalization ability. As shown in Figure~\ref{fig:distill_n8}, \method consistently maintains superior performance compared to on-policy methods throughout the entire training process. This trend highlights the importance of incorporating off-policy data, which enables more efficient and stable policy improvement. 

We further conduct the same set of experiments on Qwen2.5-Math-7B. In experiments on Qwen2.5-Math-7B, we observed that the model converges rapidly. We suspect that, for non-reasoning models, outputs tend to be shorter and easier to learn, leading to faster convergence (see Section~\ref{subsec:reset} for detailed discussion). To mitigate this, we introduce reward normalization and periodic reference policy reset, which enable the utilization of negative samples while gradually relaxing the KL constraint. With these enhancements, as shown in Figure~\ref{fig:qwen_n8}, \method achieves performance comparable to or exceeding that of the baseline, with an overall improvement of 1.2\%. On out-of-domain benchmarks, \method similarly achieves a 4.2\% improvement, demonstrating strong generalization performance.
\begin{figure}[htbp]
    \centering

    \begin{subfigure}{\linewidth}
        \centering
        \includegraphics[width=\linewidth]{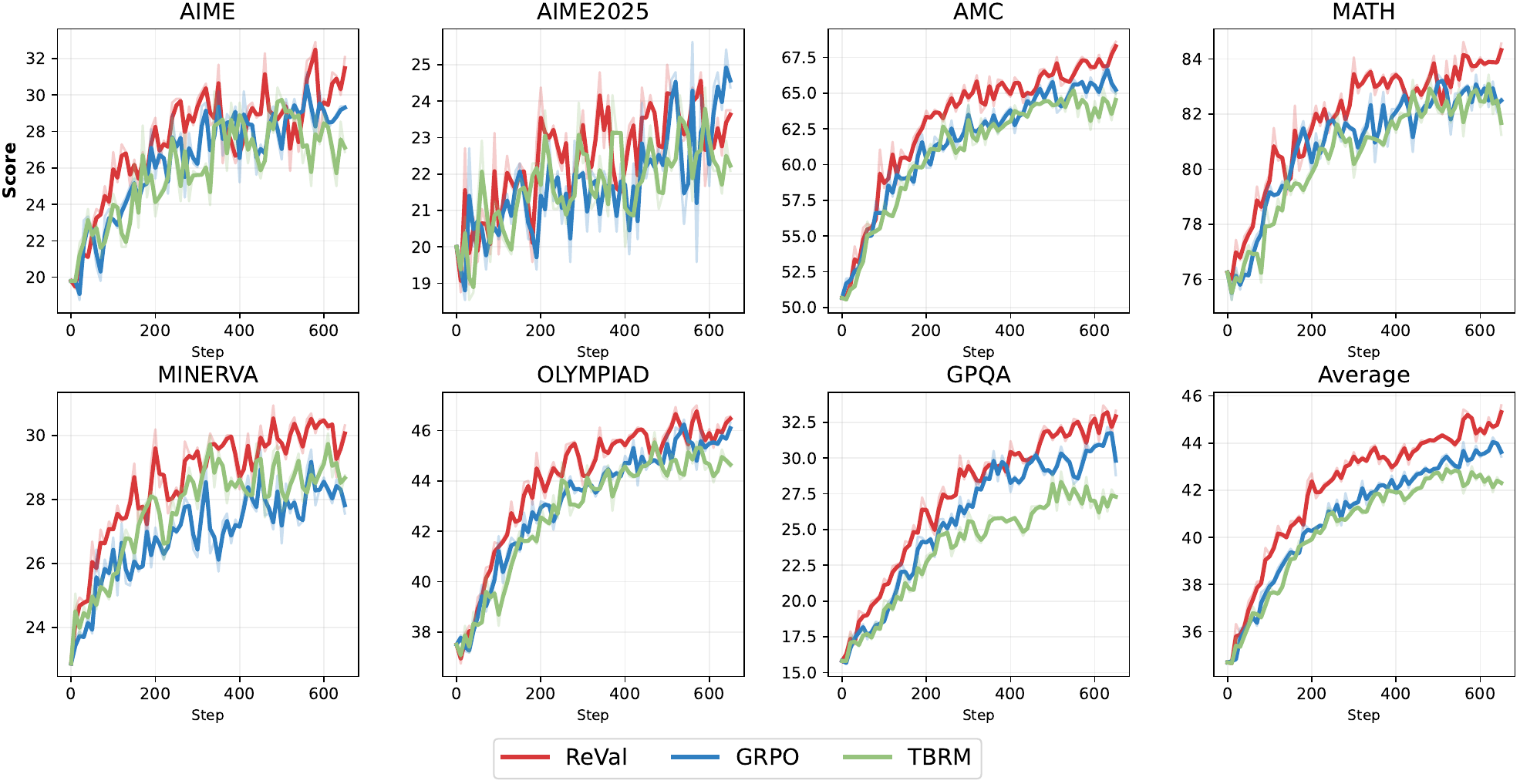}
        \caption{DPSK-R1-Distill-1.5B}
        \label{fig:distill_n8}
    \end{subfigure}
    \begin{subfigure}{\linewidth}
        \centering
        \includegraphics[width=\linewidth]{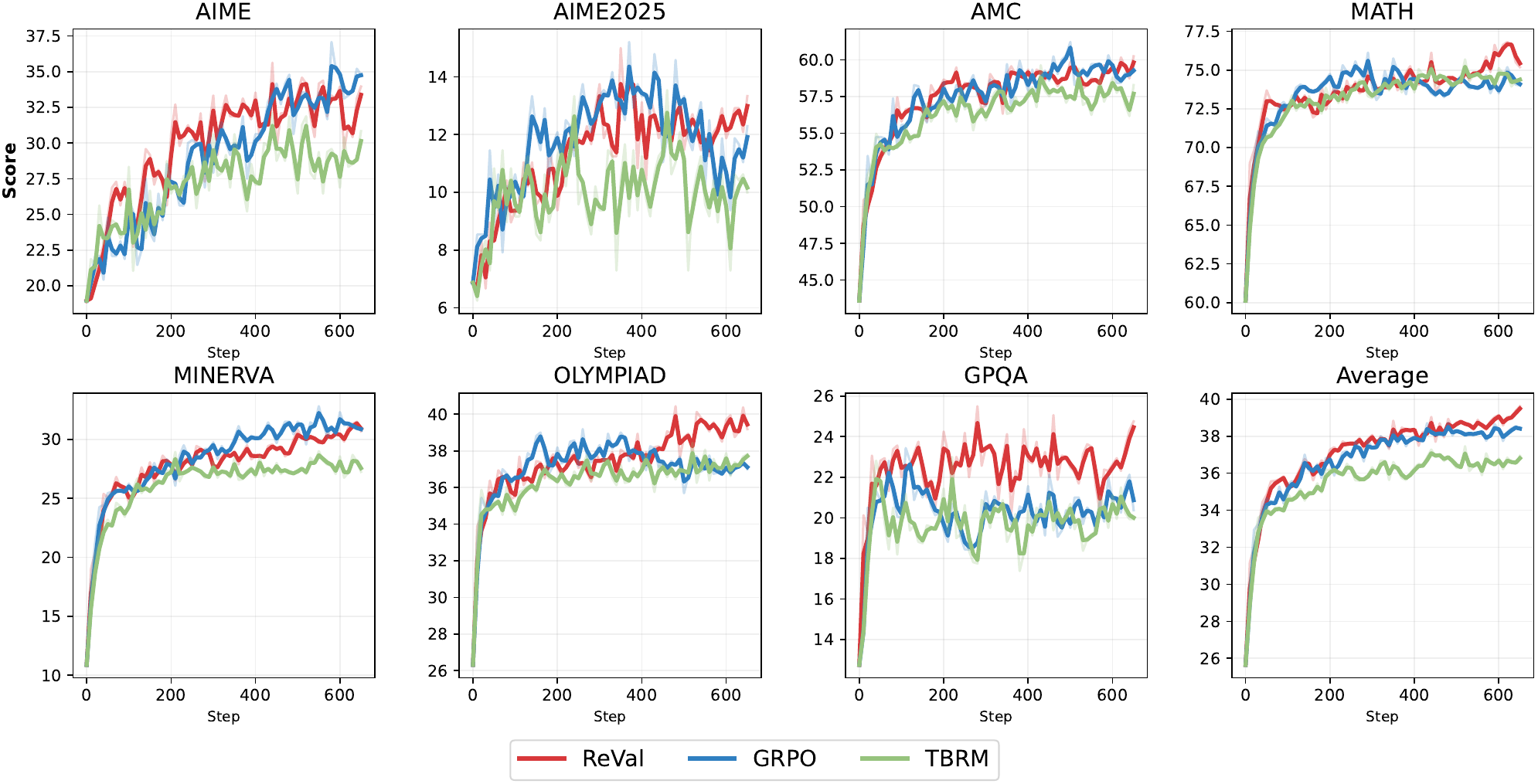}
        \caption{Qwen2.5-Math-7B}
        \label{fig:qwen_n8}
    \end{subfigure}


    \caption{Training curves of DPSK-R1-Distill-1.5B and Qwen2.5-Math-7B. Curves show the accuracy across seven benchmarks (AIME, AIME25, AMC, MATH, Minerva, Olympiad, and GPQA) as well as the average accuracy.}
    \label{fig:training_curves_n8}
\end{figure}

\subsection{Performance under Limited Rollouts}
In many real-world applications, generating on-policy trajectories at every training iteration is prohibitively expensive, as each rollout incurs substantial computational or monetary costs. Under such constraints, improving sample efficiency becomes particularly critical with limited rollouts. To evaluate this aspect, we further investigate whether \method can still achieve strong performance under the extreme setting of $n=1$. We conduct experiments using DPSK-R1-Distill-1.5B with the same set of baselines. 
The training curves are shown in Figure~\ref{fig:training_curves_n1}. On challenging benchmarks such as AIME and GPQA, the advantage of \method is especially pronounced, indicating that off-policy reuse is particularly valuable when rollouts are both expensive and informative. On the Average metric, \method achieves a final score higher than GRPO. These results demonstrate that off-policy learning substantially improves sample utilization efficiency, making \method especially advantageous in low-rollout regimes.
\begin{figure}[h]
    \centering
    \includegraphics[width=\linewidth]{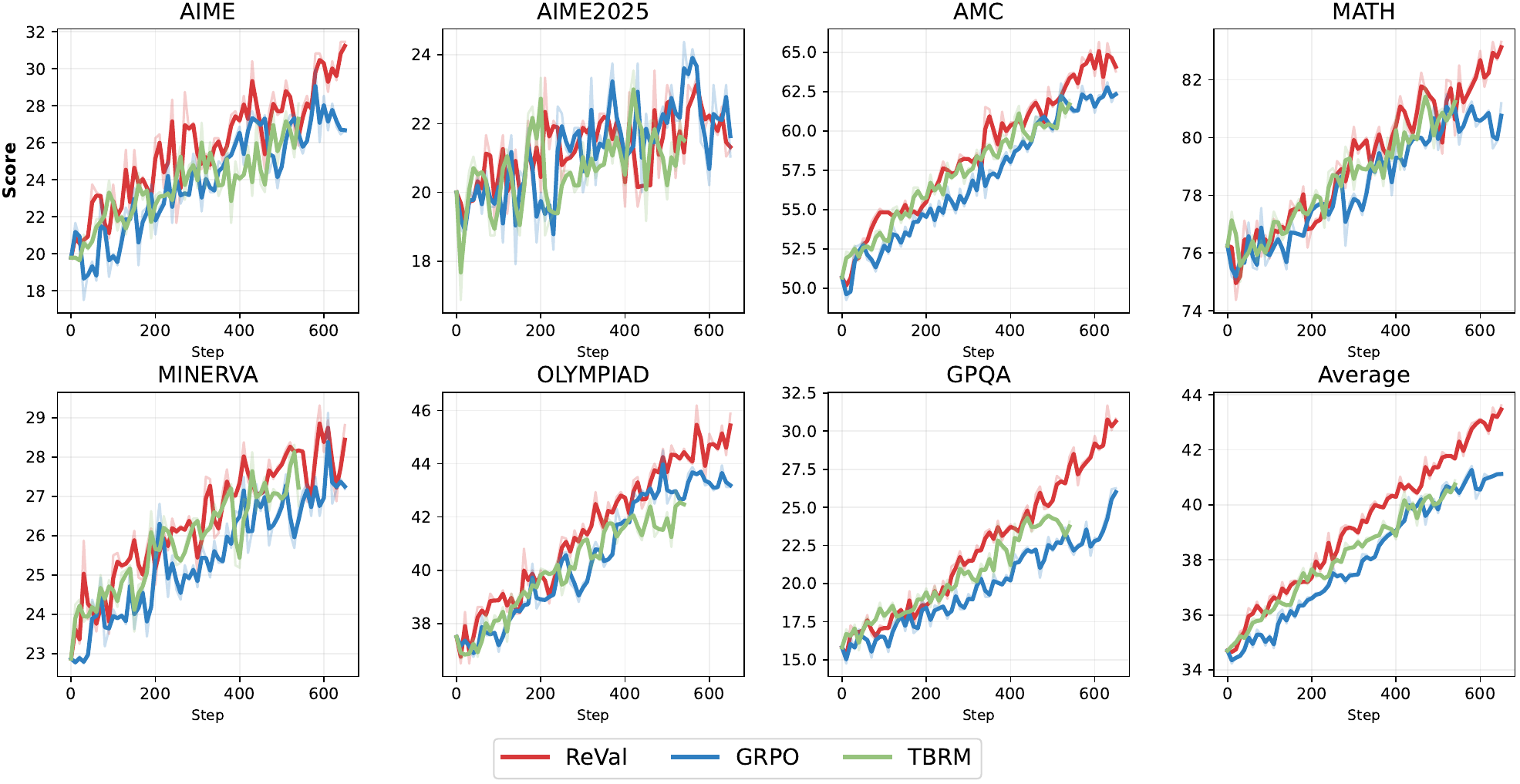}
    \caption{Training Curves of DPSK-R1-Distill-1.5B with N=1}
    \vspace{-2mm}
    \label{fig:training_curves_n1}
\end{figure}

\begin{wrapfigure}{r}{0.45\linewidth}
    \centering
    \includegraphics[width=\linewidth]{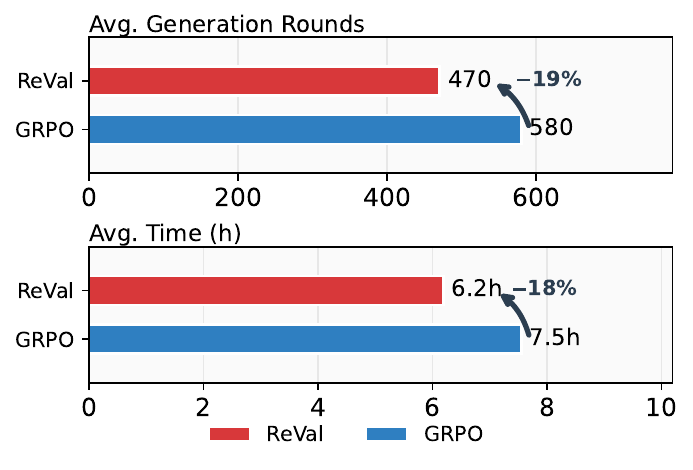}
    \caption{The Average Generation Rounds and Average Total Wall-clock Time (h).}
    \label{fig:bar}
    \vspace{-3mm}
\end{wrapfigure}
Furthermore, under this setting, we report the average number of generation rounds and total wall-clock time required to reach SOTA performance across seven tasks, as shown in Figure~\ref{fig:bar}. \method with more update steps per generation round consistently requires fewer generation rounds, with \method (step=8) reducing the number of generations from 580 (GRPO) to 470. In terms of total training time, \method also achieves the lower time cost at 6.3h, compared to 7.5h for GRPO, a reduction of 1.3h seconds, corresponding to a 18\% decrease in total training time. The experiments show that, when parameter updates are much cheaper than generation (namely, 2.8s per update vs. 36.8s per trajectory), the benefit of off-policy methods lies not only in improved sample efficiency, but also in reduced training cost through more effective reuse of generated trajectories.




\subsection{Key Factors Shaping~\method}
\vspace{2mm}

In this section, we analyze the key factors that influence the performance of \method. We study the effects of the reference policy, the hyperparameter $\beta$, and the utilization of negative samples. All experimental datasets are kept consistent with the main experiments on Qwen2.5-Math-7B, and the final results are reported as the average over seven benchmarks.

\subsubsection{Gradient Dynamics Analysis.}
\vspace{2mm}

To understand the factors that influence training, we analyze the gradient of $\mathcal{L}_{\text{ReVal}}$ with respect to $\theta$:
\begin{equation}
    \nabla_\theta \mathcal{L}_{\text{ReVal}} = -2\,\mathbb{E}_{(x,y)} \left[ \delta(x,y) \cdot \nabla_\theta \log \pi_\theta(y|x) \right],
\end{equation}
where the residual error $\delta(x,y) = \frac{r(x,y)}{\beta} - \left( V_\theta(x) - V_{\text{ref}}(x) + \log \frac{\pi_\theta(y|x)}{\pi_{\text{ref}}(y|x)} \right)$ determines both the magnitude and direction of the gradient update. We identify three key factors that affect the gradient dynamics.

\textbf{KL Regularization and Periodic Reset.} The term $\log \frac{\pi_\theta(y|x)}{\pi_{\text{ref}}(y|x)}$ grows monotonically as the policy diverges from the reference model during training. As this term increases, it progressively reduces $\delta$, weakening the gradient signal and slowing learning. To mitigate this, we periodically reset the reference model to the current policy, which resets the KL term back to zero and restores the magnitude of the gradient signal.

\textbf{Hyperparameter $\beta$.} The parameter $\beta$ directly scales the reward signal via $\frac{r(x,y)}{\beta}$, controlling its relative weight in $\delta$. When $\beta$ is too large, the reward signal is suppressed, causing $\delta$ to remain small throughout training and the gradient to vanish, leading to slow or stalled convergence. Conversely, when $\beta$ is too small, the reward dominates and may cause excessively large gradient updates, destabilizing training.

\textbf{Negative Samples.} When $r(x,y) = 0$, the TD error reduces to:
\begin{equation}
    \delta(x,y) = -\left( V_\theta(x) - V_{\text{ref}}(x) + \log \frac{\pi_\theta(y|x)}{\pi_{\text{ref}}(y|x)} \right),
\end{equation}
and the gradient becomes:
\begin{equation}
    \nabla_\theta \mathcal{L}_{\text{ReVal}} \propto \left( V_\theta(x) - V_{\text{ref}}(x) + \log \frac{\pi_\theta(y|x)}{\pi_{\text{ref}}(y|x)} \right) \cdot \nabla_\theta \log \pi_\theta.
\end{equation}
In this case, the optimization drives $\log \frac{\pi_\theta}{\pi_{\text{ref}}} \to 0$ and $V_\theta-V_{\text{ref}} \to 0$, pulling the policy back toward the reference model rather than decreasing $\log \pi_\theta$. Intuitively, negative samples should penalize incorrect responses by decreasing $\log \pi_\theta(y|x)$, rather than moving to the reference policy.
\vspace{2mm}
\subsubsection{Relaxing KL Regularization via Reference Policy Updates}
\label{subsec:reset}
\vspace{2mm}

\method incorporates KL regularization toward a reference policy, which constrains policy updates but causes the KL term $\log \frac{\pi_\theta}{\pi_{\text{ref}}}$ to grow monotonically during training, progressively weakening the gradient signal. The direct way to mitigate the influence of KL regularization is to periodically reset the reference model to the current policy. By updating $\pi_{\text{ref}}$ in this manner, the effective KL constraint is relaxed, allowing the policy to continue improving without being overly restricted by the accumulated divergence from the initial reference.

In principle, the reference policy could be updated based on the residual error (i.e., the term inside the square in Eq.~\ref{eq:loss}). 
In practice, we find that simple periodic updates work well. We experiment with periodically resetting the reference model every 50/200/400 training steps, as well as a no-reset baseline. 
\begin{wrapfigure}{r}{0.45\linewidth}
    \centering
    \includegraphics[width=\linewidth]{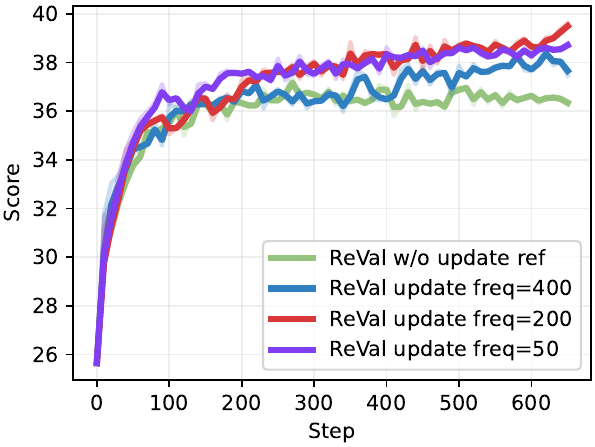}
    \caption{Comparison of different update frequency of reference model.}
    \label{fig:update_ref}
    \vspace{-3mm}
\end{wrapfigure}
We keep all other experimental settings consistent with the main experiments and vary only the update frequency of the reference policy. The results are shown in Figure~\ref{fig:update_ref}. From the Figure~\ref{fig:update_ref}, we can find that without updating the reference policy, the model performance saturates at around 200 training steps and remains unchanged thereafter. Second, updating the reference policy every 200 steps yields the best performance, suggesting that a moderate update frequency provides the most effective balance for training. When the reference policy is updated every 400 steps, a noticeable improvement occurs around the 400th step. These results suggest that periodically resetting the reference policy enables the model to escape the shrinking region induced by reference model, resulting in continued performance improvements during training.

\subsubsection{Hyperparameter \texorpdfstring{$\beta$}{beta}}

The hyperparameter $\beta$ controls the strength of the reward. A larger $\beta$ imposes a weaker signal and keeps the policy closer to the reference model, while a smaller $\beta$ allows more freedom during optimization. The choice of $\beta$ is correlated with the response length. Since the log-ratio term in Eq.~\ref{eq:loss} is summed over tokens, longer responses produce larger accumulated values and thus require a smaller $\beta$ to maintain comparable regularization. In experiments, DPSK-Distill-R1-1.5B generates responses of about 5K tokens and we set $\beta=0.002$, while Qwen2.5-Math-7B produces responses of around 600 tokens and we use $\beta=0.02$.
\begin{figure}[htbp]
    \centering
    \begin{subfigure}{0.46\linewidth}
        \centering
        \includegraphics[width=\linewidth]{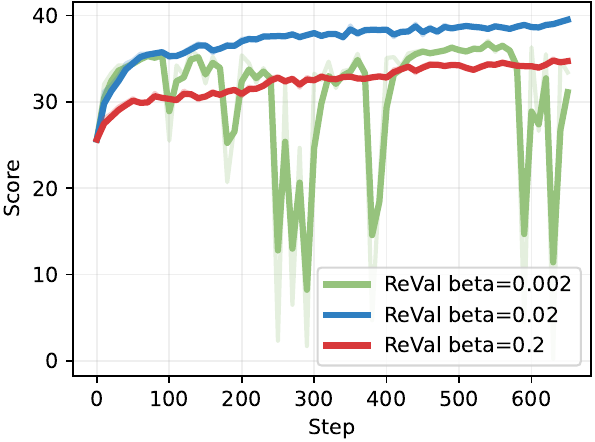}
        \vspace{-16pt}
        \caption{Performance under different $\beta$.}
        \label{fig:difference_beta}
    \end{subfigure}
    \hfill
    \begin{subfigure}{0.46\linewidth}
        \centering
        \includegraphics[width=\linewidth]{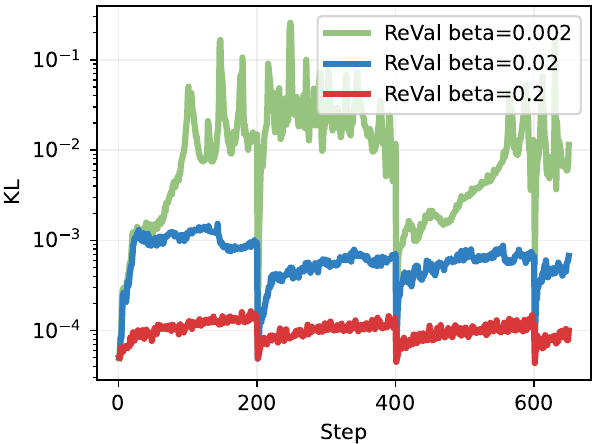}
        \vspace{-16pt}
        \caption{KL divergence under different $\beta$.}
        \label{fig:kl_beta}
    \end{subfigure}
    
    \caption{Effect of the hyperparameter $\beta$. 
    A smaller $\beta$ leads to a larger KL divergence between the policy and the reference model.}
    \vspace{-8pt}
    \label{fig:beta_kl}
\end{figure}
We experiment with different values of $\beta$ (0.2, 0.02, and 0.002), and the results are shown in Figure~\ref{fig:beta_kl}. Specifically, we report the average benchmark performance of the model in Figure~\ref{fig:difference_beta} and the corresponding KL divergence in Figure~\ref{fig:kl_beta}. For KL divergence, the reference policy is updated every 200 training steps, which results in a periodic increase. As shown in Figure~\ref{fig:beta_kl}, the value of $\beta$ significantly affects both the KL divergence and the training dynamics. When $\beta=0.2$, the KL penalty is strong, forcing the policy to stay close to the reference model. In contrast, when $\beta=0.002$, the KL constraint becomes weaker, allowing the policy to deviate further from the reference model. As a result, the KL divergence increases and the policy explores more aggressively. However, overly large deviations can destabilize optimization and eventually lead to training collapse.




\subsubsection{The Utilization of Negative Samples}
\begin{wrapfigure}{r}{0.4\linewidth}
    \centering
    \includegraphics[width=\linewidth]{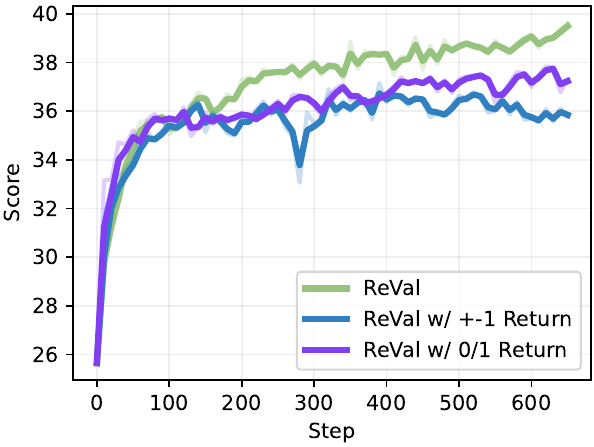}
    \vspace{-15pt}
    \caption{Comparison of different reward designs.}
    \label{fig:difference_reward_design}
\end{wrapfigure}
In reasoning tasks, 0/1 reward is commonly used (i.e., a reward of 1 for correct answers and 0 for incorrect ones). From Equation~\ref{eq:loss}, it can be seen that under this reward scheme the model increases the logits of correct responses. In contrast, when the answer is incorrect, the policy is only nudged toward the reference policy, instead of explicitly suppressing the probability of incorrect samples.
However, negative samples contain informative signals about incorrect behaviors and are crucial for effective learning~\citep{zhu2025surprising}. To better exploit this information, we explore alternative reward formulations that incorporate signals from negative samples. Specifically, we consider two variants. The first variant uses the normalized advantage $\hat{r}_\mathrm{norm}=r(x, y_i)-\operatorname{mean}\left(\left\{r(x, y_i)\right\}_{i=1}^G\right)$ as the reward, following the same formulation as in GRPO. The second variant adopts a $\pm 1$ reward scheme, where correct answers receive a reward of $+1$ and incorrect answers receive a reward of $-1$. As shown in the Figure~\ref{fig:difference_reward_design}, using the normalized advantage yields the best performance, whereas the $\pm 1$ reward scheme can even result in performance degradation.

\section{Related Work}
\subsection{Q-function Representation in LLMs}

\citet{li2025generalist} established a theoretical cornerstone in this direction: they showed that a language model trained via standard next-token prediction implicitly learns a soft Q-function, where the model logits are a principled solution to the Q-function in an offline inverse reinforcement learning formulation. Formally, given a language model $\hat{\pi}$ parameterized as $\hat{\pi}(\cdot \mid s_h) = \text{softmax}(\hat{f}(s_h, \cdot); \alpha)$, the logits $\hat{f}$ directly correspond to the soft Q-values of the data-generating policy, revealing that LLM logits are not arbitrary scores but encode value information about token-level decisions.TBRM~\citep{yuan2025trajectory} also utilized this connection between logits and Q-function, adopting a logit-as-$Q$ parameterization within its Bellman update formulation, and empirically validated the effectiveness of this parameterization. These works suggest that the Q-function in LLMs can be naturally parameterized by the model logits, forming the theoretical foundation of our value-based RL framework.

\subsection{Value-based Reinforcement Learning} 
Efficient RL is important for large-scale LLM training, particularly in asynchronous settings where policy updates and data collection are naturally decoupled~\citep{yan2024efficient, liu2025semantic, ritter2026llms}. Value-based reinforcement learning is a fundamental paradigm for this setting, as it focuses on learning action-value functions and naturally supports off-policy experience reuse. Deep Q-Networks~\citep{dqn_atari} and their numerous variants~\citep{double_dqn} have demonstrated remarkable success in high-dimensional tasks, largely attributed to the effective use of experience replay buffers for stable off-policy learning. To encourage exploration and robustness, a series of works have adopted the Maximum Entropy RL framework~\citep{max_entropy}, where Soft Q-learning (SQL)~\citep{soft-ql} introduces an energy-based formulation to satisfy the entropy-regularized objective. While advanced algorithms such as Soft Actor-Critic~\citep{sac} were further developed and achieved state-of-the-art performance in traditional RL. Inspired by the remarkable performance of value-based methods, we investigate the dynamics of off-policy buffer utilization within the SQL objective, an area that remains relatively under-explored in the context of scaling value-based RL for generative language tasks.

\subsection{Reinforcement Learning in Large Language Models}
RL has become a key component in the post-training stage of LLMs, with reward design and training algorithms being its central elements~\citep{li2025review, panglanguage}. As the field matured, however, it became clear that at LLM scale, an RL algorithm is only practical if it is computationally efficient. ReMax~\citep{li2024remax} was the first to move from actor-critic to actor-only RL, significantly reducing memory usage and training time for LLM post-training. Following ReMax, a series of methods, including GRPO~\citep{shao2024deepseekmath} and DAPO~\citep{qiying2025dapo}, further advanced this low-cost policy optimization paradigm. Policy gradient methods have dominated LLM alignment due to their simplicity, natural compatibility with pretrained language models, and relatively low computational overhead~\citep{li2025review}. But LLM development is increasingly shifting toward \emph{agentic} settings with long and highly variable horizons. The high variance in trajectory lengths~\citep{team2025kimi, fu2025areal}, the training-inference mismatch ~\citep{zhang2026beyond, yao2025offpolicy}, and the difficulty of obtaining sufficient samples~\citep{team2026kimi, gao2025rollpacker} make these methods progressively brittle and inadequate for the demand of off-policy data. These limitations motivate us to turn towards off-policy value-based algorithms. Recently, value-based approaches such as TBRM~\citep{yuan2025trajectory} and ROVER~\citep{he2025randompolicy} have been proposed, which leverage the Q-function information implicitly encoded in the LLM’s own logits for training but they are still trained in on-policy way. Meanwhile, some researchers have begun to train LLMs in an off-policy manner~\citep{zhang2025rlep,zheng2025prosperity}, but the algorithms they employ are still originally designed for on-policy settings. Our method explores combining value-based RL with off-policy training for LLMs.

\section{Conclusion}
In this paper, we investigated the role of off-policy data in LLM RL and proposed \method, a value-based algorithm designed to efficiently leverage historical trajectories. We propose \method, a Bellman-update-based method that combines stepwise signals capturing internal consistency with trajectory-level signals derived from outcome verification. \method naturally supports replay-buffer-based training, allowing efficient reuse of past trajectories. Extensive experiments on standard mathematical reasoning benchmarks demonstrate that \method achieves faster convergence, improves sample efficiency, and outperforms strong baselines such as GRPO, with a \textbf{2.7\%} improvement in AIME24 and \textbf{4.5\%} in out-of-domain benchmark GPQA on the DPSK-R1-Distill-1.5B model. Ablation studies further reveal the impact of key components, including the reference policy, the hyperparameter $\beta$, and different reward and objective designs, providing insights into how each design choice contributes to performance. Currently, our method adopts a standard FIFO replay buffer, which is not the sample-efficient design. In future work, we plan to explore more advanced buffer sampling strategies, such as prioritized experience replay. Furthermore, we will investigate the underlying mechanisms governing the varying update requirements of different data samples in value-based RL for LLM training.

\bibliography{ref}
\clearpage
\appendix
\section{Detailed Experimental Setup}
\label{sec:detailed_exp}
All experiments were implemented using the large-scale reinforcement learning framework Verl (v0.5.0). The default training and inference pipelines were preserved without modification.
For optimization, we adopted a learning rate of 
$1e-6$, following prior recommendations~\citep{qiying2025dapo}. 

For \method, the $\beta$ value was set to 0.002 for DPSK-R1-Distill-1.5B and 0.02 for Qwen-2.5-Math-7B. The buffer size was set to 5,120. For each sampled batch, we perform two updates. First, an on-policy update, followed by an off-policy update using a sample from the replay buffer. For GRPO, the importance sampling clipping thresholds were set asymmetrically to 0.28 (upper) and 0.2 (lower), and a compensation term was applied to account for inconsistencies between vLLM and FSDP. No additional KL or entropy regularization terms were used in any experiments.

DPSK-R1-Distill-1.5B was trained with a maximum sequence length of 8K tokens due to its longer CoT reasoning patterns, which require extended context windows. Qwen2.5-Math-7B was trained with an 8K token limit.

During training, evaluation was conducted every 10 iterations. For each evaluation phase, 16 responses were generated per prompt. To ensure manageable evaluation time, we capped the evaluation set size at 100 samples by randomly sub-sampling benchmarks exceeding this number.

\section{Maximum Entropy Reinforcement Learning}
The original KL-regularized RL objective in \cref{eq:reward_maximization} can be transformed into the following maximum entropy RL objective \citep{sac} with a modified reward.
\begin{align*}
    &\quad \mathbb{E}_{x \sim \rho} \Big[
    \mathbb{E}_{a_{1:H} \sim \pi_{\theta}(\cdot \mid x)}
    \big[
        r_{\text{rule}}(x, a_{1:H})
    \big] - \beta \,
\KL\!\big( 
    \pi_{\theta}(\cdot \mid x),
    \pi_{\text{ref}}(\cdot \mid x)
\big) \Big]
\\
&= \beta \cdot \expect_{\tau \sim \pi} \ls \sum_{h=1}^H \bigg( \underbrace{\frac{r (s_h, a_h)}{\beta} + \log \piref (a_h|s_h)}_{:= r_{\beta} (s_h, a_h)} + \gH (\pi (\cdot|s_h)) \bigg) \rs.
\end{align*}
Here $r (s_h, a_h) = 0$ if $h\not= H$ and $r (s_h, a_h) = r_{\text{rule}} (x, a_{1:H})$ otherwise denotes the token-level reward and $r_{\beta} (s_h, a_h) := r(s_h, a_h) / \beta + \log \piref (a_h|s_h)$ is the modified reward. Besides, $\gH (\pi (\cdot|s_h)) = \expect_{a_h \sim \pi (\cdot|s_h)} [\log (1/\pi(a_h|s_h))]$ denotes the entropy. In maximum entropy RL, the soft optimal Q-function satisfies the Bellman equation.
\begin{align*}
    Q^{\star}_{\beta} (s_h, a_h) = r_{\beta} (s_h, a_h) + \expect_{s_{h+1} \sim P (\cdot|s_h, a_h)} \ls V_{Q^\star_\beta} (s_{h+1})  \rs.
\end{align*}
Here $V_{Q} (s) := \log ( \sum_{a \in \mathcal{A}} \exp (Q (s, a) ) )$ denotes the V-function induced by $Q$. By defining the Bellman operator $(\gT_{\beta} Q) (s_h, a_h) := r_{\beta} (s_h, a_h) + \expect_{s_{h+1} \sim P (\cdot|s_h, a_h)} \ls V_{Q} (s_{h+1})  \rs$, we have that $Q^\star_{\beta}$ is the fixed point w.r.t the Bellman operator, i.e., $Q^\star_{\beta} = \gT_{\beta} Q^\star_{\beta}$. Given the soft optimal Q-function, we can derive the soft optimal policy through a softmax transformation.
\begin{align}
\label{eq:optimal_q_softmax}
    \pi^{\star}_{\beta} (a_h|s_h) = \frac{\exp (Q^{\star}_{\beta} (s_h, a_h))}{\sum_{a \in \gA}\exp (Q^{\star}_{\beta} (s_h, a))} = \exp \lp Q^{\star}_{\beta} (s_h, a_h) - V_{Q^{\star}_{\beta}} (s_h)  \rp.
\end{align}

\section{The Issue of TBRM}
\label{sec:issue_tbrm}

When the reward is zero, the TBRM objective does not reduce to a KL minimization objective. Specifically, setting $r(\tau)=0$ yields
\[
\gL (\theta)=\left(V_\theta(s_1) + \sum_{h=1}^{H} \log \frac{\pi_\theta(a_h \mid s_h)}{\pi_{\text{ref}}(a_h \mid s_h)}\right)^2 .
\]
Minimizing this objective drives the squared term toward zero, which requires the log-likelihood ratio to cancel the value term $V_\theta(s_1)$ rather than directly minimizing the KL divergence between $\pi_\theta$ and $\pi_{\text{ref}}$. As a result, the optimal solution does not correspond to matching the reference policy. To empirically verify this behavior, we additionally conduct experiments where the reward is fixed to $0$. As shown in Figure~\ref{fig:tbrm_zero_reward_kl}, the KL divergence does not converge to zero when the reward is fixed to $0$, empirically confirming that the TBRM objective does not reduce to KL minimization in this case.

\begin{figure}[htbp]
    \centering
    \includegraphics[width=0.6\linewidth]{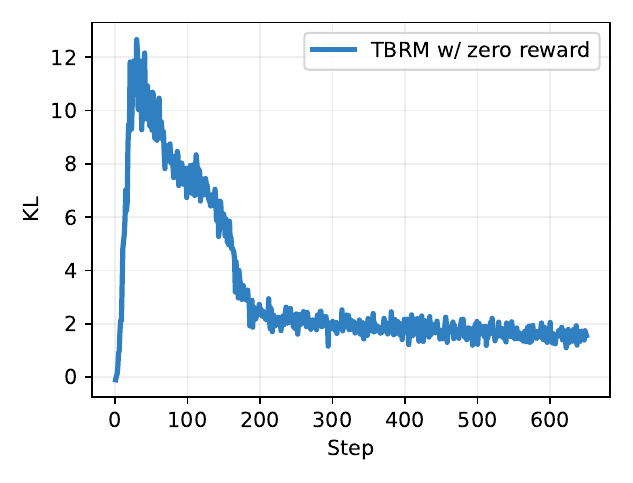}
    \vspace{-12pt}
    \caption{Training behavior of TBRM when the reward is fixed to $0$. The KL divergence between the current policy and the reference policy does not converge to zero.}
    \label{fig:tbrm_zero_reward_kl}
\end{figure}

\section{Proof}
\label{sec:proof}
\begin{proposition}
Consider the objective in Eq.~\ref{eq:loss}. When $r=0$ and the policy is initialized as $\pi_\theta = \pi_{\text{ref}}$, we have $V_\theta(s_1) = V_{\text{ref}}(s_1)$ and $\log \pi_\theta(\tau) = \log \pi_{\text{ref}}(\tau)$, which yields $\mathcal{L}_{\text{ReVal}}(\theta) = 0$.
\end{proposition}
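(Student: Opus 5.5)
The argument is a direct substitution into Eq.~\ref{eq:loss}; the one point needing care is what ``$\pi_\theta = \pi_{\text{ref}}$'' means in the logit-as-$Q$ parameterization. We read the initialization as $\theta = \theta_{\text{ref}}$, so that $Q_\theta(s,\cdot) = Q_{\text{ref}}(s,\cdot)$ at every state, i.e.\ the logits themselves coincide and not merely their softmaxes. This is the natural reading, since $\pi_{\text{ref}}$ is the initial model and \cref{alg:brm_mc} initializes $\theta$ from $\theta_{\text{ref}}$. Under it, both claimed identities follow at once. First, $V_\theta(s_1) = \log\sum_{a}\exp Q_\theta(s_1,a) = \log\sum_a \exp Q_{\text{ref}}(s_1,a) = V_{\text{ref}}(s_1)$. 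Second, $\pi_\theta(a_h\mid s_h) = \text{softmax}(Q_\theta(s_h,\cdot))_{a_h} = \pi_{\text{ref}}(a_h\mid s_h)$ for every $h$, and summing logs over $h=1,\dots,H$ gives $\log\pi_\theta(\tau) = \log\pi_{\text{ref}}(\tau)$ for every trajectory $\tau$.

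With these identities in hand, I would substitute into the second line of Eq.~\ref{eq:loss}. For each $\tau\in\mathcal{D}$, setting $r_{\text{rule}}(\tau)=0$, the residual becomes
\[
\bigl(V_\theta(s_1)-V_{\text{ref}}(s_1)\bigr) + \bigl(\log\pi_\theta(\tau)-\log\pi_{\text{ref}}(\tau)\bigr) - 0 = 0 .
\]
Squaring and averaging over $\mathcal{D}$ gives $\mathcal{L}_{\text{ReVal}}(\theta)=0$. The dataset $\mathcal{D}$ plays no role here, so the conclusion holds for off-policy replay data exactly as for on-policy data.

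To connect this to Calibrated Initialization, I would add that the loss is a sum of squares and hence nonnegative. A zero value is therefore a global minimum, so $\pi_{\text{ref}}$ is a minimizer of the objective. One can also note that the gradient $-2\,\mathbb{E}[\delta\,\nabla_\theta\log\pi_\theta]$ vanishes because $\delta\equiv 0$, so no spurious drift occurs at initialization. This is where the contrast with \cref{prop:tbrm} appears: in TBRM the residual at $\pi_\theta=\pi_{\text{ref}}$ is $V_\theta(s_1)$, which is generally nonzero.

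The main obstacle is the interpretive point above. If only the policies agree, the logits may differ by a state-dependent shift $c(s)$, and then $V_\theta(s_1)-V_{\text{ref}}(s_1)=c(s_1)$ need not vanish. I would handle this by stating explicitly that the proposition concerns the logit-level initialization $\theta=\theta_{\text{ref}}$, which is the case the statement intends.
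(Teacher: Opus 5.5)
Your proposal is correct and follows the paper's proof essentially verbatim: set $Q_\theta = Q_{\text{ref}}$, deduce $V_\theta(s_1)=V_{\text{ref}}(s_1)$ and $\log\pi_\theta(\tau)=\log\pi_{\text{ref}}(\tau)$, and substitute into Eq.~\ref{eq:loss} with $r=0$. Your explicit reading of the initialization as $\theta=\theta_{\text{ref}}$ makes precise a step the paper takes for granted: the paper simply asserts that $\pi_\theta=\pi_{\text{ref}}$ gives $Q_\theta=Q_{\text{ref}}$, which fails if the logits differ by a state-dependent shift, as you note.
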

\begin{proof}
When $\pi_\theta = \pi_{\text{ref}}$, the Q-function satisfies $Q_\theta = Q_{\text{ref}}$, which directly implies $V_\theta(s_1) = \log \sum_{a} \exp Q_\theta(s_1, a) = \log \sum_{a} \exp Q_{\text{ref}}(s_1, a) = V_{\text{ref}}(s_1)$. Furthermore, $\log \pi_\theta(\tau) = \sum_{h=1}^H \log \pi_\theta(a_h \mid s_h) = \sum_{h=1}^H \log \pi_{\text{ref}}(a_h \mid s_h) = \log \pi_{\text{ref}}(\tau)$. Substituting into Eq.~\ref{eq:loss} with $r=0$:
\begin{align*}
\mathcal{L}_{\text{ReVal}}(\theta) 
&= \frac{1}{|\mathcal{D}|} \sum_{\tau \in \mathcal{D}} \left( V_\theta(s_1) - V_{\text{ref}}(s_1) + \log \pi_\theta(\tau) - \frac{r_{\text{rule}}(\tau)}{\beta} - \log \pi_{\text{ref}}(\tau) \right)^2 \\
&= \frac{1}{|\mathcal{D}|} \sum_{\tau \in \mathcal{D}} \left( 0 + 0 - 0 \right)^2 = 0.
\end{align*}
\end{proof}

\section{Analysis of Objective Variants}
\label{sec:analysis}

For TBRM, the additional term $V_\theta(s_1)$ in the objective undermines Calibrated Initialization. An alternative variant is to remove $V_\theta(s_1)$, since this term is independent of the action and does not affect the optimal solution. This yields:
\begin{align}
\mathcal{L}_{\text{regression}}(\theta)
&= \frac{1}{|\mathcal{D}|} \sum_{\tau \in \mathcal{D}} \left( \log \pi_\theta(\tau) - \frac{r_{\text{rule}}(\tau)}{\beta} - \log \pi_{\text{ref}}(\tau) \right)^2.
\label{eq:reval_v2}
\end{align}

This objective is closely related to the regression-based formulations in \citet{team2025kimi} and \citet{ritter2026llms}, with the key difference that these work introduce an additional $\log Z(x)$ term. Since $\log Z(x)$ is independent of the policy, it does not affect the optimal solution. In practice, $\log Z(x)$ can be replaced by reward normalization~\citep{team2025kimi}. In regression-based methods, the target labels are fixed, which allows for multiple training passes over the same data stably. 
In Eq.~\ref{eq:reval_v2}, we follow the practice in Kimi K1.5~\citep{team2025kimi} and introduce reward normalization to ensure consistency with the regression-based method:
$\hat{r}_\text{norm} = r_\text{rule}(x, y_i) - \operatorname{mean}\left(\left\{r_\text{rule}(x, y_i)\right\}_{i=1}^G\right)$. However, we empirically find that training remains unstable with reward normalization applied.

\begin{figure}[htbp]
    \centering
    \begin{subfigure}{0.46\linewidth}
        \centering
        \includegraphics[width=\linewidth]{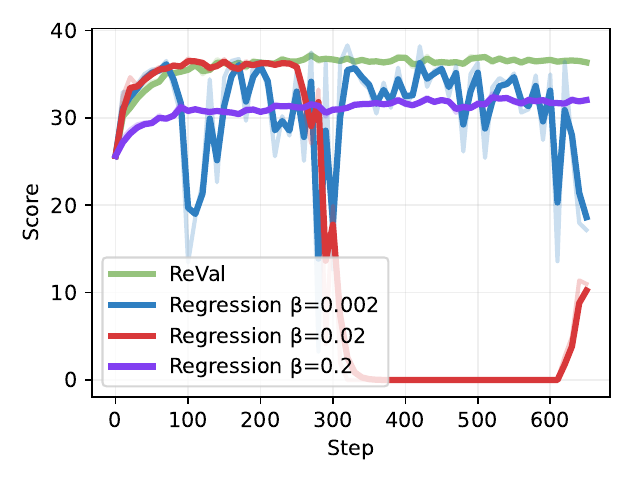}
        \caption{Average benchmark performance.}
        \label{fig:difference_objective}
    \end{subfigure}
    \hfill
    \begin{subfigure}{0.46\linewidth}
        \centering
        \includegraphics[width=\linewidth]{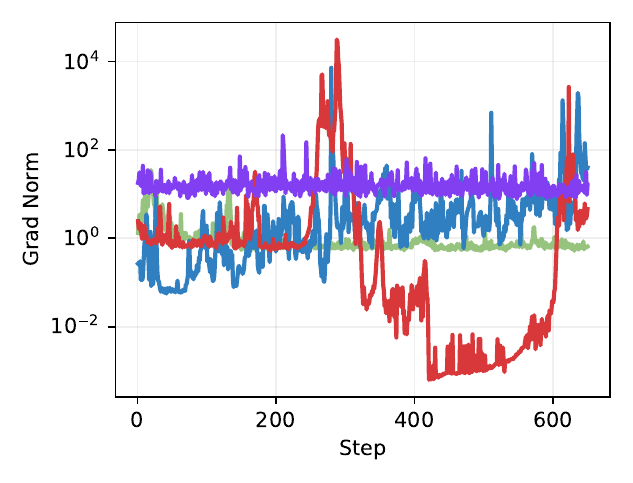}
        \caption{Gradient norm.}
        \label{fig:grad_norm}
    \end{subfigure}
    
\caption{Comparison of different objectives. (a) Average benchmark performance across training steps. (b) Corresponding gradient norm during training.}    
    \label{fig:different_obj}
\end{figure}

We explore different regression by comparing Eq.~\ref{eq:reval_v2} and Eq.~\ref{eq:loss}. 
Specifically, for Eq.~\ref{eq:reval_v2}, we follow the practice in Kimi K1.5~\citep{team2025kimi} and introduce reward normalization:
 i.e., $\hat{r}_\mathrm{norm}=r(x, y_i)-\operatorname{mean}\left(\left\{r(x, y_i)\right\}_{i=1}^G\right)$. However, through experiments, we observe that optimizing the loss in Eq.~\ref{eq:reval_v2} is not stable. We tried different values of $ \beta = 0.2, \ 0.02, \ 0.002 $ and observed that the model remained unstable. We then compared with \method under the same setting (using reward normalization, $ \beta = 0.02 $, and without periodic updates). The results are shown in Figure~\ref{fig:different_obj}. It can be observed that the model remains stable only when $ \beta = 0.2 $, while instability appears at $ \beta = 0.02 $. Under the same conditions, \method exhibits better stability. Experimental observations show that the model can experience extremely large and instable gradient norms (around $ 1\mathrm{e}4 $) which may be related to some anomalous values of $\log \pi / \pi_{\text{REF}} $.

\section{Prompt Templates}\label{exp:prompt}

The prompt templates of all methods used for benchmarking reward models on Multifacted-Bench are shown below. As RM-Bench does not provide specific instructions for each sample, all methods use the default system prompt and the instructions in the \textbf{User} part of the prompt templates will also be removed.

\begin{figure}[htbp]
\fbox{%
\begin{minipage}{0.98\textwidth}
\small

\textbf{User}

\textcolor{blue}{\{Question\}} Let's think step by step and output the final answer within \textbackslash boxed\{\}.

\vspace{0.2cm}

\textbf{Assistant}

<think>

\end{minipage}
}
\caption{Prompt template of DeepSeek-R1-Distill-Qwen-1.5B.}
\label{fig:ours-prompt}
\end{figure}
\begin{figure}[htbp]
\fbox{%
\begin{minipage}{0.98\textwidth}
\small
\textbf{System}

Please reason step by step, and put your final answer within \textbackslash boxed\{\}.

\vspace{0.2cm}

\textbf{User}

\textcolor{blue}{\{Question\}} Let's think step by step and output the final answer within \textbackslash boxed\{\}

\vspace{0.2cm}

\textbf{Assistant}

\end{minipage}
}
\caption{Prompt template of Qwen2.5-Math-7B.}
\end{figure}

\end{document}